\documentclass[twoside,11pt]{article}
\usepackage[preprint]{jmlr2e}

\usepackage[T1]{fontenc}
\usepackage[utf8]{inputenc}
\usepackage{amsmath,amssymb,mathtools}
\usepackage{booktabs}
\usepackage{lmodern}
\usepackage{microtype}
\usepackage{lastpage}
\usepackage{tikz}
\usetikzlibrary{arrows.meta,positioning,fit}
\usepackage[capitalise,nameinlink,noabbrev]{cleveref}
\crefname{proposition}{Proposition}{Propositions}
\Crefname{proposition}{Proposition}{Propositions}

\newcommand{\TopNMF}{Top-NMF}

\newcommand{\R}{\mathbb{R}}
\newcommand{\Z}{\mathbb{Z}}
\newcommand{\Rnn}{\mathbb{R}_{\ge 0}}

\newcommand{\X}{X}
\newcommand{\W}{W}
\newcommand{\V}{V}
\newcommand{\Lapx}{L_{\mathrm{recon}}}
\newcommand{\Ltop}{L_{\mathrm{top}}}
\newcommand{\lambdatop}{\lambda_{\mathrm{top}}}

\newcommand{\Fspace}{\mathcal{F}}
\newcommand{\Fil}{\mathfrak{F}}

\DeclareMathOperator{\PH}{PH}
\DeclareMathOperator{\supp}{supp}
\DeclareMathOperator{\Pers}{Pers}
\DeclareMathOperator{\PerS}{PerS}
\DeclareMathOperator{\TP}{TP}
\DeclareMathOperator{\WTP}{WTP}
\DeclareMathOperator{\CP}{CP}
\DeclareMathOperator{\SW}{SW}

\ShortHeadings{NMF with Topological Regularisation}{de Jong van Lier, Kaji, and Kim}
\firstpageno{1}

\begin{document}

\title{Non-negative Matrix Factorisation with Topological Regularisation}

\author{\name Matias de Jong van Lier \email matias.vanlier@recursiveai.co.jp \\
       \addr Recursive Inc.\\
       Tokyo, 150-0002, Japan
       \AND
       \name Shizuo Kaji \email kaji.shizuo.7r@kyoto-u.ac.jp \\
       \addr Graduate School of Science\\
       Kyoto University\\
       Kyoto, 606-8502, Japan
       \AND
       \name Keunsu Kim \email kim.keunsu.752@m.kyushu-u.ac.jp \\
       \addr Institute of Mathematics for Industry\\
       Kyushu University\\
       Fukuoka, 819-0395, Japan}

\editor{My editor}

\maketitle

\begin{abstract}%
We investigate the learning of interpretable bases in non-negative matrix factorisation (NMF) by regularising the topology of the learned basis functions. Our approach is motivated by the observation that many data modalities can be viewed as non-negative functions on a structured domain, where the quality of a basis is intrinsically linked to its topology. However, naive methods for incorporating the topology of the support are often hindered by discreteness and threshold dependence, rendering them unsuitable for continuous optimisation. We address these challenges by employing persistent homology as a stable, threshold-free topological quantifier and by designing topological scores that integrate into the NMF objective as regularisers. The resulting framework, \TopNMF, encompasses spatially coherent image components, periodic time-series structures, and clique-like graph signals within a unified modelling language.
\end{abstract}

\begin{keywords}
non-negative matrix factorisation, persistent homology, topological regularisation, interpretable basis learning
\end{keywords}

\section{Introduction}
\label{sec:introduction}

A standard approach to analysing high-dimensional data is to represent each observation as a linear combination of basis vectors. Classical signal-processing methods, such as the Fourier transform and wavelets, employ analytic bases fixed \emph{a priori} \citep{mallat1999wavelet}, while data-driven methods learn the basis from the data itself, as seen in principal component analysis (PCA) \citep{jolliffe2002principal}. This perspective underlies a broad range of dictionary-learning and matrix-factorisation methods \citep{rubinstein2010dictionaries,mairal2010online}. When the learned basis aligns with the intrinsic structure of the data, the resulting representation can be both compact and interpretable.

For many datasets, however, interpretability is tied not merely to low reconstruction error but to the structural form of the learned basis. This is especially true for additive non-negative data such as images, count data, spectra, and activity signals. In such settings, non-negative matrix factorisation (NMF) approximates a data matrix $\X \in \Rnn^{n \times d}$ by solving
\[
\min_{\W \in \Rnn^{n \times r},\, \V \in \Rnn^{r \times d}}
\Lapx(\X,\W\V),
\]
under the constraint of non-negative coefficients $\W$ and basis vectors $\V$ \citep{lee1999learning,lee2000algorithms}.
The approximation quality is typically measured by the squared Frobenius reconstruction loss:
\begin{equation}
\Lapx(\X,\W\V) := \|\X-\W\V\|_F^2.
\label{eq:approximation-loss}
\end{equation}

Because the model forbids subtractive cancellation, NMF often yields part-based decompositions that are easier to interpret than the holistic components commonly produced by unconstrained linear methods. This characteristic has made NMF particularly effective in document analysis, audio processing, and bioinformatics \citep{xu2003document,smaragdis2003non,brunet2004metagenes}. Yet, non-negativity alone does not guarantee interpretable basis vectors. Owing to the non-convexity of the NMF objective, different decompositions can achieve similar reconstruction quality while exhibiting markedly different structural behaviour \citep{donoho2003does}.

This limitation has motivated a substantial body of literature on regularised NMF. Some approaches regularise the coefficients to preserve the geometry or neighbourhood structure of the low-dimensional representation, as in graph-regularised NMF \citep{cai2010graph}. Others regularise the basis vectors directly, for example, by encouraging sparsity \citep{hoyer2004non} or smooth variation across adjacent locations \citep{yin2010nonnegative,townes2023nonnegative}. While these methods often improve interpretability, they do not directly control the global structure of a basis vector. A sparse basis can still be fragmented into disconnected components, and a smooth basis can blur together semantically distinct regions. If the goal is to learn bases that correspond to coherent parts, then the support structure itself becomes a natural target for regularisation.

This observation suggests a functional viewpoint. Rather than treating basis vectors merely as rows of a matrix, we regard them as non-negative functions on a structured domain $\Omega$, modelled here as a finite cell complex. This perspective encompasses regular image grids, graphs, and other discrete domains within a unified framework, making it possible to define interpretable structural priors directly in terms of topology. On an image grid, one may prefer connected and spatially coherent supports; for time series, one may seek basis functions that exhibit periodic structure \citep{perea2015sliding,perea2015sw1pers}; for graph signals, one may wish to isolate basis functions concentrated on clique-like subgraphs. In all these cases, the ``goodness'' of a basis function is intimately related to its topology.

Given a non-negative basis function $v \colon \Omega \to \Rnn$, one way to quantify its topology is to examine the thresholded support $\supp_\tau(v) := \{x \in \Omega : v(x) \ge \tau\}$. A major difficulty is that this support depends on the chosen threshold $\tau$, and small perturbations of the function can cause discontinuous changes in the set. A more robust alternative is to replace the topology of a single thresholded support with the persistent homology of the entire superlevel-set filtration. Persistent homology records how connected components, holes, and other topological features appear and disappear as the threshold varies \citep{Edelsbrunner2008,dey2022computational}. Consequently, it provides a threshold-free topological quantifier that is compatible with continuous optimisation. Recent work has demonstrated that persistence-based objectives can be effectively analysed and optimised within modern numerical frameworks \citep{leygonie2022framework,carriere2021optimizing,davis2020stochastic}.

Motivated by this perspective, we propose \TopNMF, a regularised NMF framework in which the reconstruction loss is augmented by a topological penalty on the basis functions:
\[
\min_{\W,\V \ge 0}
\;
\Lapx(\X,\W\V) + \lambdatop \Ltop(\V).
\]
The role of $\Ltop$ is not to impose a single universal notion of interpretability, but to encode domain-appropriate topological priors. In the developments that follow, this principle leads to basis functions with connected supports for images, periodic structure for time series, and clique-oriented structure for graph signals. Algorithmically, the method remains an NMF problem; the functional viewpoint is what lets topological information enter the objective. The end-to-end workflow is summarised in \Cref{fig:topnmf-overview}.

\begin{figure}[t]
\centering
\resizebox{\textwidth}{!}{%
\begin{tikzpicture}[
  >=Latex,
  font=\footnotesize,
  flow/.style={-Latex, thick, black!75},
  panel/.style={draw=black!70, thick, rounded corners=3pt, fill=gray!6},
  header/.style={font=\scriptsize\bfseries, align=center},
  scorebox/.style={draw=black!55, rounded corners=2pt, fill=blue!4,
                   align=center, inner sep=3pt, minimum width=2.3cm, minimum height=0.85cm},
  badge/.style={draw=black!55, rounded corners=6pt, fill=white,
                inner xsep=6pt, inner ysep=3pt, font=\scriptsize},
  caplab/.style={font=\scriptsize, align=center},
  iconlab/.style={font=\scriptsize, align=center, text=black!70},
  every node/.style={inner sep=2pt}
]


\begin{scope}[shift={(0,0)}]
  \draw[panel] (0,0) rectangle (3.2,6.0);
  \node[header] at (1.6,5.55) {Data on\\$\Omega$};
  \draw[black!20] (0.2,5.10) -- (3.0,5.10);

  \begin{scope}[shift={(1.125,3.90)}]
    \draw[black!55] (0,0) rectangle (0.95,0.95);
    \foreach \x in {0.19,0.38,0.57,0.76} {\draw[black!25] (\x,0) -- (\x,0.95);}
    \foreach \y in {0.19,0.38,0.57,0.76} {\draw[black!25] (0,\y) -- (0.95,\y);}
    \fill[blue!45] (0.19,0.19) rectangle (0.57,0.57);
    \fill[blue!25] (0.38,0.57) rectangle (0.76,0.76);
  \end{scope}
  \node[iconlab] at (1.6,3.65) {image};

  \begin{scope}[shift={(1.0,2.40)}]
    \draw[black!45, ->] (0,0) -- (0,1.0);
    \draw[black!45, ->] (0,0) -- (1.2,0);
    \draw[very thick, blue!65!black]
      plot[smooth, tension=0.8] coordinates {
        (0.08,0.30) (0.22,0.55) (0.36,0.85) (0.50,0.55)
        (0.64,0.25) (0.78,0.55) (0.92,0.85) (1.06,0.50)
      };
  \end{scope}
  \node[iconlab] at (1.6,2.15) {time series};

  \begin{scope}[shift={(1.08,0.98)}]
    \coordinate (a) at (0.12,0.78);
    \coordinate (b) at (0.55,0.92);
    \coordinate (c) at (0.92,0.65);
    \coordinate (d) at (0.72,0.20);
    \coordinate (e) at (0.20,0.20);
    \draw[black!45] (a) -- (b) -- (c) -- (d) -- (e) -- (a);
    \draw[black!35] (a) -- (c) (b) -- (d);
    \foreach \p in {a,b,c,d,e} {\fill[blue!65!black] (\p) circle (1.6pt);}
  \end{scope}
  \node[iconlab] at (1.6,0.70) {graph signal};

  \node[caplab] at (1.6,0.25) {data matrix $\X$};
\end{scope}

\begin{scope}[shift={(3.65,0)}]
  \draw[panel] (0,0) rectangle (3.6,6.0);
  \node[header] at (1.8,5.55) {Functional\\NMF};
  \draw[black!20] (0.2,5.10) -- (3.4,5.10);

  \node at (1.8,4.55) {$\X \approx \W\V$};

  \begin{scope}[shift={(0.325,3.10)}]
    \draw[fill=blue!10, draw=black!55] (0,0) rectangle (1.10,1.05);
    \node[font=\small] at (0.55,0.52) {$\X$};
    \node at (1.35,0.52) {$\approx$};
    \draw[fill=blue!18, draw=black!55] (1.55,0.20) rectangle (1.95,1.05);
    \node[font=\small] at (1.75,0.62) {$\W$};
    \draw[fill=blue!30, draw=black!55] (2.10,0.55) rectangle (2.95,1.05);
    \node[font=\small] at (2.52,0.80) {$\V$};
  \end{scope}

  \node[caplab] at (1.8,2.55) {each row $v_j$ of $\V$\\is a function on $\Omega$};

  \begin{scope}[shift={(1.15,1.10)}]
    \draw[black!55] (0,0) rectangle (1.30,0.75);
    \foreach \x in {0.22,0.43,0.65,0.87,1.08} {\draw[black!22] (\x,0) -- (\x,0.75);}
    \foreach \y in {0.19,0.38,0.56} {\draw[black!22] (0,\y) -- (1.30,\y);}
    \fill[blue!50] (0.22,0.19) rectangle (0.87,0.56);
    \fill[blue!28] (0.65,0.38) rectangle (1.08,0.56);
  \end{scope}
  \node[caplab] at (1.8,0.50) {basis as a function};
\end{scope}

\begin{scope}[shift={(7.70,0)}]
  \draw[panel] (0,0) rectangle (3.2,6.0);
  \node[header] at (1.6,5.55) {Topological\\score};
  \draw[black!20] (0.2,5.10) -- (3.0,5.10);

  \node[scorebox] (fil)   at (1.6,4.30) {$\Fil(v_j)$\\\scriptsize filtration};
  \node[scorebox] (ph)    at (1.6,3.00) {$\PH$\\\scriptsize persistence};
  \node[scorebox] (score) at (1.6,1.70) {$\mathcal{Q}(v_j)$\\\scriptsize scalar score};

  \draw[flow] (fil) -- (ph);
  \draw[flow] (ph) -- (score);

  \node[caplab] at (1.6,0.55) {$\Ltop(\V)=\sum_j \mathcal{Q}(v_j)$};
\end{scope}

\begin{scope}[shift={(11.35,0)}]
  \draw[panel] (0,0) rectangle (3.2,6.0);
  \node[header] at (1.6,5.55) {Regularised\\objective};
  \draw[black!20] (0.2,5.10) -- (3.0,5.10);

  \node[align=center] at (1.6,4.40)
    {$\Lapx(\X,\W\V)$\\$+\,\lambdatop\,\Ltop(\V)$};
  \node[caplab] at (1.6,3.55) {reconstruction\\$+$ topology};

  \node[badge] at (1.6,2.75) {connected (images)};
  \node[badge] at (1.6,2.05) {periodic (time series)};
  \node[badge] at (1.6,1.35) {clique-like (graphs)};

  \node[caplab] at (1.6,0.55) {interpretable bases};
\end{scope}

\draw[flow] (3.22,3.00) -- (3.63,3.00);
\draw[flow] (7.27,3.00) -- (7.68,3.00);
\draw[flow] (10.92,3.00) -- (11.33,3.00);
\end{tikzpicture}%
}
\caption{Conceptual overview of \TopNMF. Observations are treated as non-negative functions on a structured domain $\Omega$, allowing images, time series, and graph signals to be expressed in a common language. Standard NMF variables $\W$ and $\V$ are retained, but each basis row $v_j$ is interpreted as a function on $\Omega$. A domain-appropriate filtration is built from $v_j$, summarised by persistent homology, and converted into a scalar topological score. The final objective balances reconstruction loss with these scores, yielding basis functions that are not only reconstructive but also structurally interpretable.}
\label{fig:topnmf-overview}
\end{figure}

Our work contributes to the broader effort to integrate persistent homology into optimisation and machine learning \citep{bruel2020topology,hofer2020graph,nishikawa2023adaptive}. Unlike prior combinations of NMF and topological data analysis, which mainly use persistent homology for preprocessing or feature extraction \citep{ichinomiya2020protein,ichinomiya2022topological,obayashi2022persistent}, we embed it directly in the NMF objective so that topology shapes the basis functions during optimisation (see \cref{sec:related-work}).

\paragraph{Contributions.}
This paper makes four main contributions:
\begin{enumerate}
\item We formulate NMF from a functional viewpoint on a structured domain, providing a precise definition of topological priors for basis functions across images, time series, and graph signals.
\item We demonstrate how persistent homology can be used to design topological quantifiers, transforming intuitive but discrete interpretability criteria into optimisable regularisers.
\item We instantiate this framework with domain-specific topological penalties, including connectedness for image bases, periodicity scores for time-series bases, and a clique-oriented regulariser for graph signals. The latter is to our knowledge a new persistent-homology-based descriptor for edge-weighted graphs and may be of independent interest beyond the NMF setting (\cref{subsec:graph-data}).
\item We develop the corresponding optimisation problem and empirically evaluate the resulting method as a model for interpretable basis learning.
\end{enumerate}

\paragraph{Organisation.}
The remainder of the paper is organised as follows. We first develop the functional formulation, then introduce the relevant background on persistent homology, define the topological regularisers for each domain, study the optimisation problem, and finally report experimental results on image, time-series, and graph data.

\section{Related Work}
\label{sec:related-work}

This section reviews three research threads central to our contribution: non-negative matrix factorisation and its regularisation for interpretable basis discovery, persistent-homology-based optimisation, and the previous intersections between topological data analysis and matrix factorisation.

\paragraph{NMF and regularised matrix factorisation.}
NMF is a well-established tool for decomposing additive non-negative data into non-negative coefficients and basis vectors \citep{lee1999learning,lee2000algorithms,cichocki2008nonnegative,wang2012nonnegative}. Its practical success is closely tied to interpretability, as non-negativity often encourages part-based representations.
To further increase the interpretability of NMF, a large body of work has been devoted to regularised formulations, in which prior structure is encoded through additional penalties or constraints \citep{taslaman2012framework}.

One major research direction regularises the coefficient matrix to preserve geometric information in the representation space. Graph-regularised NMF \citep{cai2010graph}, building on manifold-learning concepts such as Laplacian eigenmaps \citep{belkin2001laplacian}, is a representative example. Related approaches attempt to preserve neighbourhood or manifold structures in low-dimensional coordinates, sometimes under the heading of ``topology-preserving'' factorisation \citep{zhang2008topology}. In those contexts, however, the term ``topology'' typically refers to local neighbourhood structures inherited from a data graph or manifold, rather than the topological structure of the basis supports themselves. In a matrix language, the regularisation is applied to the coefficient matrix $\W$, but not to the basis vectors in $\V$.
A second direction regularises the basis matrix directly. Sparse NMF \citep{hoyer2004non} promotes localised parts by reducing the number of active entries, while smoothness or total-variation-type penalties encourage adjacent coordinates to vary gradually \citep{yin2010nonnegative}. Our method complements these by ensuring that the active region of a basis is not only sparse but also connected, hole-bearing, or clique-like, through regularisers designed to encode such global structural properties.

\paragraph{Persistent homology in optimisation and learning.}
Persistent homology is increasingly being utilised as a differentiable or optimisable component within machine learning models, rather than as a purely descriptive summary computed \emph{post hoc}. In modern terminology, this is a form of \emph{persistence-in-the-loop learning}: persistent homology is embedded directly into the model, filtration, or loss, and therefore influences learning throughout optimisation. Previous research has incorporated persistence-based losses into geometric reconstruction, graph learning, and topological feature learning \citep{bruel2020topology,hofer2020graph,nishikawa2023adaptive}. On the theoretical front, the differential and variational properties of persistence-based objectives have been investigated by \citet{leygonie2022framework}, while optimisation results for persistence-driven functions have been established by \citet{carriere2021optimizing} and can be related to broader non-smooth optimisation results, such as those of \citet{davis2020stochastic}. These developments demonstrate that persistent homology can be integrated into optimisation loops in a mathematically rigorous manner. Our work aligns with this literature, yet the object of regularisation is distinct: rather than reconstructing a geometric shape or learning a filtration directly, we regularise the basis functions within a matrix-factorisation model.

\paragraph{Topological data analysis and matrix factorisation.}
Prior work has also explored the combination of topological summaries with matrix-factorisation-style analyses, though primarily in application-driven or preprocessing-oriented contexts. For instance, persistent-homology-based descriptors have been coupled with NMF in the analysis of three-dimensional voxel data \citep{obayashi2022persistent}, and more broadly, topological features have been employed in downstream data-analysis pipelines for scientific applications \citep{ichinomiya2020protein,ichinomiya2022topological}. These studies demonstrate that persistent homology can provide informative features for subsequent learning or decomposition tasks. However, these approaches typically treat persistent homology as a fixed feature extractor, rather than as an integral part of the optimisation process. Our method is closer in spirit to persistence-based optimisation than to feature engineering; topological information is not merely extracted once and passed to NMF, but is used throughout optimisation to shape the learned basis functions.

\section{Functional Viewpoint and Problem Setup}
\label{sec:functional-viewpoint}

Our central modelling choice is to treat observations and basis elements as non-negative functions on a structured domain rather than as unstructured vectors. This retains algorithmic compatibility with NMF while letting us define topological regularisers in a domain-aware manner.

\subsection{Data and basis functions on a structured domain}
\label{subsec:data-basis-functions}

Let $\Omega$ be a finite cell complex; that is, a collection of geometric primitives such as vertices, edges, and faces, equipped with well-defined incidence and adjacency relations. This setting encompasses the domains of interest in this paper: regular image grids, graphs, and finite ordered domains for time series. We denote by
\[
\Fspace(\Omega) := \{f \colon \Omega \to \Rnn\}
\]
the space of non-negative real-valued functions on $\Omega$. Each $f \in \Fspace(\Omega)$ induces a cell-compatible superlevel-set filtration
\[
\bigl(\{x \in \Omega : f(x) \ge \tau\}\bigr)_{\tau \ge 0},
\]
which serves as the primary topological object used throughout this work.\footnote{Here \emph{cell-compatible} means that each superlevel set is a genuine subcomplex of $\Omega$, i.e.\ it is closed under taking faces. In practice we obtain this by storing $f$ at the vertices and extending it to higher-dimensional cells through the upper-star rule $f(\sigma) := \min_{p \preceq \sigma} f(p)$, where $p$ ranges over the vertices of the cell $\sigma$; this is the vertex-based construction used in our experiments.}

A dataset consists of a finite family of observations
\[
x_1, \dots, x_n \in \Fspace(\Omega).
\]
The representation-learning task is to identify basis functions
\[
v_1, \dots, v_r \in \Fspace(\Omega)
\]
and non-negative coefficients $w_{ij} \ge 0$ such that each observation can be approximated by a non-negative linear combination of the basis functions:
\begin{equation}
x_i \approx \sum_{j=1}^r w_{ij} v_j,
\qquad i=1, \dots, n.
\label{eq:functional-factorisation}
\end{equation}

To bridge \eqref{eq:functional-factorisation} with standard NMF, we enumerate the cells of $\Omega$ as
$
\Omega = \{p_1, \dots, p_d\}.
$
Each function $f \in \Fspace(\Omega)$ is then identified with its evaluation vector
$
\bigl(f(p_1), \dots, f(p_d)\bigr) \in \Rnn^d.
$
By collecting the observations row-wise, we obtain the data matrix
$
\X \in \Rnn^{n \times d},
$
where the $i$th row corresponds to $x_i$. Similarly, the basis functions form a matrix
$
\V \in \Rnn^{r \times d}
$
whose $j$th row is the discretised basis function $v_j$, and the coefficients are assembled into
$
\W \in \Rnn^{n \times r}.
$
Under this identification, \eqref{eq:functional-factorisation} reduces to the familiar matrix approximation
$\X \approx \W \V$,
which in this paper is measured by \eqref{eq:approximation-loss}.
Algorithmically, the problem remains an instance of NMF; conceptually, however, the functional language is indispensable, as the topological structure is inherited from $\Omega$ rather than from the ambient vector space $\R^d$ alone.

\subsection{Topological regularisation as a general template}
\label{subsec:topological-regularisation-template}

Our objective is to regularise the basis functions rather than the coefficients. To clarify the structure of our method, we distinguish three essential components:

\begin{enumerate}
\item A filtered topological space $\Fil(v)$ constructed from each basis function $v$.
\item Persistent homology $\PH$ computed from $\Fil(v)$.
\item A scalar score $q$ that maps this topological summary to a regularisation value.
\end{enumerate}

Formally, the topological score of a basis function is defined as
\[
\mathcal{Q}(v) = q\bigl(\PH(\Fil(v))\bigr),
\]
where the construction of $\Fil(v)$ depends on the domain and the specific structural prior to be encoded. In image and graph settings, $\Fil(v)$ is typically a superlevel-set filtration derived directly from $v$. In time-series settings, $\Fil(v)$ may instead be constructed from a derived representation, such as a sliding-window embedding, as discussed in \cref{sec:data-type-topological-scores}.

Given an index set $\mathcal{J}_{\mathrm{top}} \subseteq \{1, \dots, r\}$ of basis functions targeted for regularisation, we define the topological loss as
\begin{equation}
\Ltop(\V) := \sum_{j \in \mathcal{J}_{\mathrm{top}}} \mathcal{Q}_j(v_j).
\label{eq:topological-loss}
\end{equation}
The full optimisation problem is then formulated as follows:
\begin{equation}
\min_{\W,\V \ge 0}
\;
\Lapx(\X,\W\V) + \lambdatop \Ltop(\V).
\label{eq:topnmf-objective}
\end{equation}
The parameter $\lambdatop \ge 0$ controls the trade-off between reconstruction fidelity and the desired structural characteristics of the basis functions. When $\lambdatop=0$, \eqref{eq:topnmf-objective} reduces to standard NMF. When $\lambdatop > 0$, the model encourages factorisations whose basis functions are not only reconstructive but also topologically consistent with domain-specific expectations.

The remaining conceptual challenge lies in designing $\mathcal{Q}(v)$ to effectively encode interpretable notions of basis quality. This is addressed in the following section.
\section{Persistent Homology as a Threshold-Free Continuous Topological Quantifier}
\label{sec:support-to-ph}

Before specifying $\Fil$ and $q$ for our three data modalities, we explain why persistent homology is the right tool. The intuitive target---the topology of a basis function's active region---is discrete, unstable, and threshold-sensitive; persistent homology resolves precisely these issues.

\subsection{Support topology and its instability}
\label{subsec:support-topology}

Let $v \in \Fspace(\Omega)$ be a non-negative basis function on a structured domain $\Omega$. The simplest representation of the region where $v$ is active is its support:
\[
\supp(v) := \{\omega \in \Omega : v(\omega) > 0\}.
\]
If interpretability depends on whether the active region forms a single coherent part or multiple disconnected components, the natural topological quantifier is the $0$-th homology group $H_0(\supp(v))$, or its rank $\beta_0(\supp(v))$, which counts the number of connected components.\footnote{Throughout this paper, homology is computed with coefficients in the field $\mathbb{F}_2$.}

This viewpoint, while intuitive, lacks robustness: an arbitrarily small positive perturbation can create spurious support far from the primary mass or bridge two distinct regions through a vanishingly weak link, changing the topology of $\supp(v)$ discontinuously. The mapping
\[
v \longmapsto H_0(\supp(v))
\]
is therefore too brittle to optimise directly, and the same instability affects other summaries of $\supp(v)$, such as the hole count measured by $H_1$.

\subsection{Persistent homology as a continuous relaxation of homology}
\label{subsec:thresholded-supports}

A natural attempt to stabilise the support topology is to ignore low-amplitude values and consider, for a threshold $\tau \ge 0$, the thresholded support:
\begin{equation}
\supp_\tau(v) := \{\omega \in \Omega : v(\omega) \ge \tau\}.
\label{eq:thresholded-support}
\end{equation}
For a fixed $\tau$, $H_0(\supp_\tau(v))$ again counts connected components, now after weak activations have been suppressed, but two obstacles remain. There is no canonical choice of $\tau$, and different thresholds can yield conflicting topological interpretations of the same basis function; moreover, the mapping
\[
v \longmapsto H_0(\supp_\tau(v))
\]
is still discontinuous, since a small perturbation of $v$ near the level $\tau$ can create or destroy topological features.

Persistent homology simultaneously addresses these issues by considering \emph{all} thresholds concurrently. We refer the reader to \citet{Edelsbrunner2008} for a comprehensive treatment of the definitions and properties of persistent homology; here, we provide a concise overview.

The family of thresholded supports is naturally nested:
\[
\tau_1 \le \tau_2
\quad\Longrightarrow\quad
\supp_{\tau_2}(v) \subseteq \supp_{\tau_1}(v).
\]
This yields a filtered topological space, specifically the superlevel-set filtration:
\begin{equation}
\Fil_{\mathrm{sup}}(v)
:=
\{ \supp_\tau(v)\}_{\tau \ge 0}.
\label{eq:superlevel-filtration}
\end{equation}
For each homological dimension $k \in \Z_{\ge 0}$ and thresholds $\tau_1 \le \tau_2$, the inclusion $\supp_{\tau_2}(v) \hookrightarrow \supp_{\tau_1}(v)$ induces a linear map $H_k(\supp_{\tau_2}(v)) \to H_k(\supp_{\tau_1}(v))$. This sequence of linear maps defines a persistence module, which is compactly summarised by its persistence diagram $\PH_k(v)$---a multiset of birth--death pairs $(b, d)$. A finite pair $(b, d)$ with $b > d$ signifies\footnote{Throughout this paper, $b > d$ for superlevel filtrations (descending) and $b < d$ for Vietoris--Rips filtrations (ascending).} that a $k$-dimensional homology class appears at threshold $b$ and becomes trivial (merges or disappears) at threshold $d$. Classes that never disappear are termed \emph{essential} and are assigned $d=-\infty$.

Dimension $0$ tracks connected components, while dimension $1$ captures holes. Persistent homology thus serves as a multiscale topological summary of a basis function, recording which features are born and which die at which thresholds, together with their persistence ($|b-d|$). Long-lived features represent robust structures, whereas short-lived features typically correspond to noise. By converting this diagram into a scalar functional, we obtain a stable and continuous surrogate for the original topological prior.

As a motivating example, the following proposition formalises the idea that, on contractible domains, vanishing persistence is equivalent to vanishing support homology across all thresholds. Thus, optimising the total persistence of the finite part of the persistence diagram can be viewed as a continuous relaxation of optimising the homology of the support.

\begin{proposition}
\label[proposition]{prop:ph-support-topology}
Let $\Omega$ be contractible, and let
\[
\Pers_k(v)
:=
\sum_{(b,d) \in \PH_k^{\mathrm{fin}}(v)} |b - d|
\]
be the total persistence of the finite part of the $k$-th persistence diagram of \eqref{eq:superlevel-filtration}. Then
\[
\Pers_k(v) = 0
\quad\Longleftrightarrow\quad
\tilde{H}_k(\supp_\tau(v)) = 0
\text{ for every } \tau \ge 0.
\]
\end{proposition}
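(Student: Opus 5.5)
The plan is to read both sides of the equivalence off the barcode decomposition of the persistence module of \eqref{eq:superlevel-filtration}, and to use contractibility of $\Omega$ only to control the essential classes. Since $\Omega$ is a finite cell complex and $v$ takes finitely many values, the filtration changes only at finitely many critical values. The persistence module $\tau \mapsto H_k(\supp_\tau(v))$ is therefore pointwise finite-dimensional and decomposes into interval modules. This gives the multiset $\PH_k(v)$ of pairs $(b,d)$. Here $b$ is the threshold at which a class appears, $d<b$ is the threshold at which it dies, and essential classes have $d=-\infty$. I would fix the half-open convention that a bar $(b,d)$ is alive at $\tau$ exactly when $d<\tau\le b$. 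Then the structure theorem gives
\[
\dim H_k(\supp_\tau(v)) = \#\{(b,d)\in\PH_k(v) : d<\tau\le b\}.
\]
Pairs with $b=d$ contribute nothing and are discarded, so every finite bar has $|b-d|>0$. Consequently $\Pers_k(v)=0$ holds if and only if $\PH_k^{\mathrm{fin}}(v)=\emptyset$.

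Next I would count the essential bars using contractibility. Because $v\ge 0$, we have $\supp_0(v)=\Omega$, which is the last space of the filtration. An essential class is one that survives to the end, so the number of essential bars in degree $k$ equals $\dim H_k(\Omega)$. Contractibility of $\Omega$ then gives two cases.
\begin{itemize}
\item For $k\ge 1$ there are no essential bars, and $\tilde H_k=H_k$.
\item For $k=0$ there is exactly one essential bar. It is born at $\tau=\max v$, when the first cells enter.
\end{itemize}

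Combining the two steps, I would show that for every $\tau\ge 0$,
\[
\dim \tilde H_k(\supp_\tau(v)) = \#\{(b,d)\in\PH_k^{\mathrm{fin}}(v) : d<\tau\le b\}.
\]
For $k\ge1$ this is immediate. For $k=0$ I would split into two cases.
\begin{itemize}
\item If $\tau\le \max v$, the superlevel set is nonempty and the essential bar is alive. Passing to reduced homology subtracts exactly one from the count, namely that essential bar.
\item If $\tau>\max v$, the superlevel set is empty and no bar is alive. We have $\tilde H_0(\emptyset)=0$, since the emptyset convention only affects degree $-1$.
\end{itemize}

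With this identity both implications follow. If there are no finite bars, every $\tilde H_k(\supp_\tau(v))$ vanishes. Conversely, a finite bar $(b,d)$ with $b>d$ makes $\tilde H_k(\supp_b(v))\neq 0$, because that bar is alive at $\tau=b$.

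The main obstacle I expect is bookkeeping rather than any deep point. The endpoint conventions for superlevel filtrations must be consistent: closed superlevel sets mean a bar is alive on $(d,b]$. The $k=0$ boundary cases also need care, namely the empty superlevel set and exactly which essential bar is subtracted. I would state the interval convention explicitly at the start and check the $k=0$ case separately, so that these details do not get in the way of the main argument.
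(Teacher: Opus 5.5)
Your proposal is correct and follows the same route as the paper's proof. Both identify $\dim \tilde H_k(\supp_\tau(v))$ with the number of finite bars covering $\tau$. Both then use contractibility of $\Omega=\supp_0(v)$ to rule out every essential class except the single $H_0$ bar, which reduced homology removes. Your explicit interval convention $(d,b]$ and your separate handling of the empty superlevel sets for $\tau>\max v$ fill in details that the paper's brief argument leaves implicit.
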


\begin{proof}
For any threshold $\tau$, the rank of the reduced homology $\tilde{H}_k(\supp_\tau(v))$ corresponds to the number of finite persistence intervals that cover $\tau$, specifically $\#\{(b,d) \in \PH_k^{\mathrm{fin}}(v) : d < \tau \le b\}$. Since $\Omega = \supp_0(v)$ is contractible, the reduced homology removes the unique essential $0$-class, and no other essential classes exist. Consequently, $\tilde{H}_k(\supp_\tau(v))$ vanishes for all $\tau$ if and only if there are no finite off-diagonal intervals, which is equivalent to $\Pers_k(v) = 0$.
\end{proof}

\Cref{prop:ph-support-topology} is stated for the linear total persistence $\Pers_k$, whereas the regularisers we actually optimise are squared or reweighted variants of it, most notably the total squared persistence $\sum_{(b,d)} (b-d)^2$ defined in the next section. Each of these is a non-negative sum whose individual terms vanish precisely when the corresponding finite bar has zero length; they therefore share the zero set of $\Pers_k$. The equivalence above transfers without change: on a contractible domain, driving any such score to zero is the same as trivialising the support homology at every threshold, and decreasing it monotonically shortens the finite bars that obstruct this triviality. We adopt the squared form rather than $\Pers_k$ itself because it is differentiable in the birth--death coordinates away from changes in the persistence pairing and is stable under perturbations of the diagram \citep{skraba2020wasserstein}; both properties are exploited by the optimisation scheme of \cref{sec:optimisation}.

The next section translates this general principle into concrete formulations of $\Fil(v)$ and $q$ for the three data types considered in this work.

\begin{example}
The distinction between topological regularisation and sparsity can be illustrated on a simple one-dimensional grid $\Omega = \{1, \dots, 5\}$. Consider a signal $x = (1, 1, 0, 1, 1)$, which admits two distinct exact non-negative factorisations $x = v_1 + v_2$ and $x = v_1' + v_2'$, defined as follows:
\[
\begin{aligned}
v_1 &= (1, 1, 0, 0, 0), & v_2 &= (0, 0, 0, 1, 1); \\
v_1' &= (0, 1, 0, 1, 0), & v_2' &= (1, 0, 0, 0, 1).
\end{aligned}
\]
Both decompositions achieve zero reconstruction error. Furthermore, since the basis vectors in the first set are merely permutations of those in the second, any coordinate-permutation-invariant sparsity measure (e.g., defined via the $L_1$ or the $L_0$ norm) assigns identical scores to both factorisations.

However, their persistent homology reveals a clear structural difference. Consider the superlevel-set filtration for $H_0$.
\begin{itemize}
    \item For $v_1$, as the threshold $\tau$ decreases from $\infty$ to $0$, a single connected component is born at $\tau=1$. This component persists forever. Consequently, $\PH_0(v_1)$ contains only one essential class $(1,-\infty)$ and no finite classes, yielding $\Pers_0(v_1) = 0$. The same applies to $v_2$, so the total topological penalty is $0$.
    \item For $v_1'$, two disconnected components are born at $\tau=1$. As $\tau$ reaches $0$, these two components merge into the single connected component of the domain $\Omega$. In the persistence module, one component persists as the essential class, while the other ``dies'' upon merging at $\tau=0$. Therefore, $\PH_0(v_1')=\{(1,0),(1,-\infty)\}$, giving a total persistence of $\Pers_0(v_1') = |1 - 0| = 1$. A similar calculation for $v_2'$ yields $\Pers_0(v_2') = 1$.
\end{itemize}
Thus, while sparse NMF treats these two decompositions as equivalent, the topological regulariser $\Ltop(\V) = \sum_j \Pers_0(v_j)$ strictly prefers the first decomposition.
\end{example}

\section{Topological Scores for the Three Data Types}
\label{sec:data-type-topological-scores}

We now instantiate the abstract template
\[
\mathcal{Q}(v) = q\bigl(\PH(\Fil(v))\bigr)
\]
for the three data modalities studied in this paper. For each case, we specify the construction of the filtered topological space $\Fil(v)$ and the scalar score $q$ derived from the resulting persistence diagram.

\subsection{Vectors, images, and scalar fields}
\label{subsec:support-based-data}

The most direct application occurs when a basis function is itself a non-negative scalar field on a structured domain $\Omega$. This encompasses vectors indexed by a one-dimensional grid, images on a pixel lattice, and, more generally, functions on a finite cell complex. In this setting, $\Fil(v)$ is defined as the superlevel-set filtration \eqref{eq:superlevel-filtration} introduced in \cref{sec:support-to-ph}:
\[
\Fil(v) = \Fil_{\mathrm{sup}}(v) := \{\supp_\tau(v)\}_{\tau \ge 0}.
\]

We denote by $\PH_k(v)$ the $k$-dimensional persistence diagram of this filtration and by $\PH_k^{\mathrm{fin}}(v)$ its set of finite birth--death pairs. In dimension $0$, the diagram records the emergence of disconnected active regions at high thresholds and their subsequent merging as the threshold decreases. In dimension $1$, it captures holes that persist across scales.

A fundamental scalar score is the \emph{total squared persistence}:
\begin{equation}
\TP^{(k)}(v) := \sum_{(b,d) \in \PH_k^{\mathrm{fin}}(v)} (b-d)^2.
\label{eq:total-squared-persistence}
\end{equation}
For $k=0$, this quantity measures the fragmentation of the active region; long-lived bars correspond to connected components that remain isolated over a wide range of thresholds. For $k=1$, it quantifies persistent holes. Consequently, $\TP^{(0)}$ and $\TP^{(1)}$ serve as natural support-based penalties when one seeks basis functions with connected and hole-free superlevel sets. As a persistence functional, \eqref{eq:total-squared-persistence} is stable under standard perturbations of the diagram \citep{skraba2020wasserstein}.

This score can be further refined by incorporating the death time. Assuming the basis function is normalised to take values in $[0,1]$, we define the \emph{weighted total squared persistence}:
\begin{equation}
\WTP_p^{(k)}(v) := \sum_{(b,d) \in \PH_k^{\mathrm{fin}}(v)} (1-d)^p (b-d)^2,
\qquad p \ge 0.
\label{eq:weighted-total-squared-persistence}
\end{equation}
For $p=0$, this reduces to $\TP^{(k)}$. For $p>0$, components that merge only at low thresholds receive higher penalties. The rationale is clearest in $0$-dimensional persistence: this weighting ensures that separating valleys at low altitudes are penalised more severely, making $\WTP_p^{(0)}$ effective for suppressing structurally significant fragmentation arising from deep valleys. \Cref{fig:relative-valleys} illustrates this effect: both signals have the same persistence $b-d=0.30$, but the one with a lower death time ($d=0.20$) is penalised more heavily than the one with a high-plateau fluctuation ($d=0.65$).

\begin{figure}[t]
\centering
\begin{tikzpicture}[
  x=1cm,
  y=3.2cm,
  >=latex,
  signal/.style={very thick, blue!65!black, line cap=round},
  guide/.style={densely dashed, gray!65},
  birth/.style={densely dashed, orange!85!black, thick},
  death/.style={densely dashed, red!75!black, thick},
  measure/.style={<->, thick, black!75},
  point/.style={circle, fill=black, inner sep=1.2pt},
  every node/.style={font=\small}
]

\begin{scope}
  \node[font=\bfseries] at (2.5,1.18) {Low death time};
  \draw[->] (0,0) -- (5.25,0) node[right] {$x$};
  \draw[->] (0,0) -- (0,1.08) node[above] {$v(x)$};
  \foreach \y/\lab in {0.2/{0.20},0.5/{0.50},1/{1.00}} {
    \draw[guide] (0,\y) -- (5.05,\y);
    \node[left] at (0,\y) {\scriptsize \lab};
  }
  \draw[signal]
    plot[smooth, tension=0.85] coordinates {
      (0.15,0.05) (0.75,0.94) (1.55,0.20) (2.45,0.50)
      (3.35,0.21) (5.00,0.08)
    };
  \draw[birth] (0.15,0.50) -- (5.05,0.50)
    node[pos=0.95, above left, orange!85!black] {$b=0.50$};
  \draw[death] (0.15,0.20) -- (5.05,0.20)
    node[pos=0.55, below, red!75!black] {$d=0.20$};
  \node[point] at (2.45,0.50) {};
  \node[point] at (1.55,0.20) {};
  \draw[measure] (4.75,0.20) -- (4.75,0.50)
    node[midway, right] {$b-d=0.30$};
  \node[align=center, font=\scriptsize] at (2.6,-0.20)
    {$\TP^{(0)}$ contribution $=0.09$\\
     $\WTP_{1}^{(0)}$ contribution $=0.072$};
\end{scope}

\begin{scope}[shift={(7.0,0)}]
  \node[font=\bfseries] at (2.5,1.18) {High death time};
  \draw[->] (0,0) -- (5.25,0) node[right] {$x$};
  \draw[->] (0,0) -- (0,1.08) node[above] {$v(x)$};
  \foreach \y/\lab in {0.65/{0.65},0.95/{0.95},1/{1.00}} {
    \draw[guide] (0,\y) -- (5.05,\y);
    \node[left] at (0,\y) {\scriptsize \lab};
  }
  \draw[signal]
    plot[smooth, tension=0.85] coordinates {
      (0.15,0.08) (0.70,0.98) (1.55,0.65) (2.45,0.95)
      (3.35,0.66) (5.00,0.12)
    };
  \draw[birth] (0.15,0.95) -- (5.05,0.95)
    node[pos=0.95, above left, orange!85!black] {$b=0.95$};
  \draw[death] (0.15,0.65) -- (5.05,0.65)
    node[pos=0.55, below, red!75!black] {$d=0.65$};
  \node[point] at (2.45,0.95) {};
  \node[point] at (1.55,0.65) {};
  \draw[measure] (4.75,0.65) -- (4.75,0.95)
    node[midway, right] {$b-d=0.30$};
  \node[align=center, font=\scriptsize] at (2.6,-0.20)
    {$\TP^{(0)}$ contribution $=0.09$\\
     $\WTP_{1}^{(0)}$ contribution $=0.0315$};
\end{scope}
\end{tikzpicture}
\caption{Two signals with the same $0$-dimensional lifetime but different
structural interpretations. The finite class on the left has
$(b,d)=(0.50,0.20)$, while the one on the right has
$(b,d)=(0.95,0.65)$; in both cases the persistence is $b-d=0.30$, so the
unweighted contribution to $\TP^{(0)}$ is $0.30^2=0.09$. For $p=1$, however,
the weighted contributions are $(1-0.20)0.30^2=0.072$ and
$(1-0.65)0.30^2=0.0315$, respectively. Thus the factor $(1-d)^p$ in
\eqref{eq:weighted-total-squared-persistence} penalises the low separating
valley more strongly than the high-plateau fluctuation.}
\label{fig:relative-valleys}
\end{figure}

\subsection{Edge-weighted graph data}
\label{subsec:graph-data}

We consider data represented as non-negative functions on the edges of a graph, such as co-occurrence matrices or brain networks. The underlying assumption is that the network can be decomposed into clusters with dense internal connectivity. Our goal is to learn basis functions that reflect this structure by promoting clique-like patterns in their support. The clique-promoting functional developed in this subsection is, to the best of our knowledge, a new persistent-homology-based construction for edge-weighted graphs; as we discuss at the end of the subsection, it is of independent interest as a graph descriptor beyond the NMF context.

Let $G=(U,E)$ be a complete undirected graph, where $U$ denotes the vertex set. Let $v \colon E \to \Rnn$ be an edge-weight function normalised such that $\|v\|_{\infty}=1$; we refer to such $v$ as \emph{admissible}. We maintain every vertex at a filtration value of $1$ and include an edge $e$ at threshold $\tau$ whenever $v(e) \ge \tau$. This defines the descending graph filtration:
\[
\Fil(v) = \Fil_{\mathrm{graph}}(v) := \{G_\tau(v)\}_{\tau \in [0,1]},
\qquad G_\tau(v) := \bigl(U, \{e \in E : v(e) \ge \tau\}\bigr).
\]

In this one-dimensional filtration, the relevant persistent homology resides in dimensions $0$ and $1$. Dimension $0$ records the duration for which vertex groups remain disconnected before being joined by edges. Dimension $1$ records the appearance of cycles, which capture dense internal connectivity. A clique-like edge pattern typically manifests as delayed merging between groups (persistent $H_0$) and early cycle formation within groups (persistent $H_1$).

Let $\PH_0^{\mathrm{fin}}(v)$ denote the finite $0$-dimensional birth--death pairs and $\PH_1^{\mathrm{tr}}(v)$ denote the $1$-dimensional pairs with death times truncated to $0$. We define the \emph{clique-promoting functional}:
\begin{equation}
\CP_{\alpha}(v) := -\sum_{(b,d) \in \PH_0^{\mathrm{fin}}(v)} (b-d)^2 - \alpha \sum_{(b,d) \in \PH_1^{\mathrm{tr}}(v)} (b-d)^2,
\qquad \alpha > 0.
\label{eq:clique-promoting-functional}
\end{equation}
By minimising $\CP_{\alpha}$ within the global objective \eqref{eq:topnmf-objective}, the model rewards long lifetimes in both dimensions. The first term encourages delayed merging of vertex groups, while the second encourages strong cyclic structures within groups. Together, they promote basis functions whose support graph resembles a union of dense cliques rather than sparse, tree-like patterns. The parameter $\alpha$ acts as a resolution scale, balancing group separation and internal density.

The following proposition provides a graph-theoretic interpretation of these persistence terms, which is essential for both understanding the behaviour of $\CP_{\alpha}$ and computing its subgradients.

\begin{proposition}
\label[proposition]{prop:kruskal-representation}
Let $v \colon E \to [0,1]$, and let $T(v)$ be a maximum-weight spanning tree of the weighted complete graph $(G,v)$. For the graph filtration defined above, we have:
\begin{align}
\sum_{(b,d) \in \PH_0^{\mathrm{fin}}(v)} (b-d)^2 &= \sum_{e \in T(v)} (1-v(e))^2, \label{eq:ph0-kruskal} \\
\sum_{(b,d) \in \PH_1^{\mathrm{tr}}(v)} (b-d)^2 &= \sum_{e \in E \setminus T(v)} v(e)^2. \label{eq:ph1-kruskal}
\end{align}
Consequently, the clique-promoting functional can be expressed as:
\begin{equation}
\CP_{\alpha}(v) = -\sum_{e \in T(v)} (1-v(e))^2 - \alpha \sum_{e \in E \setminus T(v)} v(e)^2.
\label{eq:cp-kruskal}
\end{equation}
\end{proposition}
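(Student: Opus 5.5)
The plan is to run Kruskal's algorithm for a \emph{maximum}-weight spanning tree in lockstep with the standard union--find computation of persistent homology of the graph filtration $\Fil_{\mathrm{graph}}(v)$. All vertices sit at filtration value $1$, so at $\tau=1$ there are $|U|$ components, each born at $b=1$. As $\tau$ decreases, edges enter in non-increasing order of $v(e)$; this is exactly Kruskal's processing order for a maximum-weight spanning tree, with ties broken consistently with the filtration order. When an edge $e$ enters at $\tau=v(e)$, exactly one of two things happens. Either its endpoints lie in different components, so $e$ is accepted into $T(v)$ and one $0$-class dies. Or its endpoints are already connected, so $e$ is rejected by Kruskal and closes a new independent cycle, giving birth to a $1$-class.

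The first identity, \eqref{eq:ph0-kruskal}, comes from the merging edges. Every $0$-class is born at $1$, so the elder rule is irrelevant for lifetimes: each merging edge $e\in T(v)$ produces the finite pair $(1,v(e))$. Because $G$ is complete, $G_0(v)=G$ is connected. Hence exactly $|U|-1$ merges occur, exactly one class is essential, and the finite pairs are in bijection with the edges of $T(v)$. Summing $(1-v(e))^2$ over $T(v)$ gives \eqref{eq:ph0-kruskal}.

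The second identity, \eqref{eq:ph1-kruskal}, comes from the non-tree edges. A graph has no $2$-cells, so no $1$-class ever dies and $H_1$ classes are essential. Each non-tree edge $e$ gives birth at $b=v(e)$ to one such class. Since $\dim H_1(G_\tau)=|E_\tau|-|U|+\beta_0(G_\tau)$, the number of births equals $|E\setminus T(v)|$ by rank counting. Truncating deaths to $0$ turns each class into the pair $(v(e),0)$, with squared lifetime $v(e)^2$. Summing over $E\setminus T(v)$ gives \eqref{eq:ph1-kruskal}, and substituting both identities into \eqref{eq:clique-promoting-functional} yields \eqref{eq:cp-kruskal}.

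The main obstacle is non-uniqueness when $v$ has ties, since then $T(v)$ is not unique. I would argue that both sides are independent of the choice. The persistence diagram is intrinsic to the filtration. On the tree side, the multiset of weights $\{v(e):e\in T(v)\}$ is the same for every maximum-weight spanning tree, by the standard exchange argument, or equivalently because for each $\tau$ the number of tree edges with weight $\ge\tau$ equals $|U|-\beta_0(G_\tau)$. The complementary multiset on $E\setminus T(v)$ is therefore also fixed. A secondary point is the edges with $v(e)=0$: they enter at $\tau=0$ and still behave correctly, contributing lifetime $1$ in $H_0$ or $0$ in $H_1$. The normalisation $\|v\|_\infty=1$ is not needed for these identities.
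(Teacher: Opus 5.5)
Your proposal is correct and follows the same route as the paper: a Kruskal-type sweep from large to small weights, in which each tree edge kills a $0$-class born at $1$, and each non-tree edge creates an essential $1$-cycle born at $v(e)$ whose death is truncated to $0$. Your extra treatment of ties fills in details the paper's terse proof leaves implicit: you show, via the count $|U|-\beta_0(G_\tau)$ of tree edges of weight at least $\tau$, that the weight multiset of a maximum-weight spanning tree does not depend on which tree is chosen, and you also check the zero-weight edges.
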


\begin{proof}
The edges of a maximum-weight spanning tree $T(v)$ are precisely those that merge previously disconnected components during a sweep from large to small weights. Each such edge $e \in T(v)$ causes the death of a $0$-dimensional class at threshold $v(e)$. Since all $0$-dimensional classes (vertices) are born at threshold $1$, the contribution is $(1-v(e))^2$. Conversely, any edge $e \notin T(v)$ connects vertices that are already in the same component, thereby creating a $1$-dimensional cycle at birth time $v(e)$ with a death time truncated to $0$. Its contribution is thus $v(e)^2$.
\end{proof}

\begin{example}
\label{ex:alpha-effect}
Consider the complete graph $K_4$ and two admissible edge functions: $v_{\triangle}$ (a unit-weight $3$-clique with one isolated vertex) and $v_{K_4}$ (a unit-weight $4$-clique). A direct calculation using \eqref{eq:cp-kruskal} yields $\CP_\alpha(v_{\triangle}) = -1-\alpha$ and $\CP_\alpha(v_{K_4}) = -3\alpha$. It follows that $\CP_\alpha(v_{K_4}) < \CP_\alpha(v_{\triangle})$ if and only if $\alpha > 1/2$. Thus, $\alpha$ serves as a resolution parameter: for $\alpha < 1/2$, the score prefers smaller, well-separated cliques, whereas for $\alpha > 1/2$, it favours denser, more inclusive structures.
\end{example}

The following theorem characterises the local minima of $\CP_\alpha$, confirming its role as a clique-promoting regulariser.

\begin{theorem}[Local minima of $\CP_\alpha$]
\label{thm:cp-local-minimizers}
Let $v \colon E \to [0,1]$ be an admissible edge-weight function (i.e.\ $\|v\|_{\infty}=1$), and assume that $\supp(v)$ contains at least two edges. Then $v$ is a strict local minimiser of $\CP_\alpha$ among admissible edge weights if and only if each non-trivial connected component of the support graph $(U, \supp(v))$ is a clique of size at least $3$, and $v \equiv 1$ on each such component.\footnote{A graph formed from the disjoint union of complete graphs is called a cluster graph.}
\end{theorem}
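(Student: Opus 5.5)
The plan is to work throughout with the Kruskal form \eqref{eq:cp-kruskal} of \cref{prop:kruskal-representation}. Write $F(v) := -\CP_\alpha(v) = \sum_{e\in T(v)}(1-v(e))^2 + \alpha\sum_{e\notin T(v)} v(e)^2$, so that strict local minimisers of $\CP_\alpha$ are exactly the strict local maximisers of $F$ on the admissible set. Since the persistence diagram does not depend on which maximum-weight spanning tree is chosen, $F(v) = F_T(v)$ for \emph{every} maximum-weight spanning tree $T$ of $v$, where $F_T$ denotes the quadratic obtained by freezing the tree. The proof then has three parts: necessity of binary values, identification of the binary strict maximisers, and the converse.

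\textbf{Step 1: a strict local maximiser is $\{0,1\}$-valued.} Suppose some value $c\in(0,1)$ is attained, and let $E_c := v^{-1}(c)$. I perturb the whole level class at once, setting $v_t := v + t\,\mathbf 1_{E_c}$ for $|t|$ small. This preserves the weak ordering of all edge weights, so a fixed maximum-weight spanning tree $T$ of $v$ remains one for every $v_t$. It also keeps $\|v_t\|_\infty = 1$, because $c<1$. Hence $t\mapsto F(v_t) = F_T(v_t)$ is a nonconstant sum of squares of affine functions of $t$, and therefore strictly convex. A strictly convex function has no strict local maximum at $t=0$, so $v$ is not a strict local maximiser.

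I expect the main obstacle to be exactly this tie-breaking issue. Perturbing a single edge can make it switch between tree and non-tree status. The one-sided pieces are then both decreasing away from $0$, which gives a genuine kink maximum, so the argument fails for single edges. Perturbing the whole level class uniformly is what avoids this.

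\textbf{Step 2: binary $v$ must be a cluster graph with cliques of size at least $3$.} Let $v = \mathbf 1_S$ and let $S$ be the support graph. A maximum-weight spanning tree consists of a spanning forest of $S$ together with weight-$0$ edges joining its components. Two kinds of perturbation show that $v$ is not a strict local maximiser in the bad cases.
\begin{itemize}
\item If some non-trivial component of $S$ is not a clique, pick a missing edge $e$ inside it and raise it to $\varepsilon$. Its endpoints are already joined by a path of weight-$1$ edges, so $e$ stays a non-tree edge, and $F$ increases by $\alpha\varepsilon^2>0$.
\item If some component is a single edge $e$, lower $e$ to $1-\varepsilon$. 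It remains a bridge and hence a tree edge, so $F$ increases by $\varepsilon^2$. Admissibility is kept because $S$ has at least two edges, so some other edge still has weight $1$; this is precisely where the hypothesis is used.
\end{itemize}

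\textbf{Step 3: the converse.} Let $v=\mathbf 1_S$ with $S$ a disjoint union of cliques of size at least $3$, and take an arbitrary admissible perturbation $v+h$ with $\|h\|_\infty$ small. Then $h\le 0$ on $S$ and $h\ge 0$ off $S$, and no weight-$0$ edge lies inside a clique. For small $h$, a maximum-weight spanning tree of $v+h$ still consists of a spanning tree of each clique built from near-$1$ edges, together with near-$0$ edges joining different components. I then compare terms:
\begin{itemize}
\item each tree edge inside a clique contributes $h(e)^2$;
\item each non-tree clique edge contributes $\alpha(1+h(e))^2 \le \alpha - 2\alpha|h(e)| + O(h^2)$;
\item each inter-component tree edge contributes $(1-h(e))^2 = 1 - 2h(e) + O(h^2)$;
\item the remaining near-$0$ edges contribute $O(h^2)$.
\end{itemize}
The tree edges inside a clique only gain at second order. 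However, a clique of size at least $3$ contains cycles, and the maximum-weight choice of tree ensures that the lowest edges of each clique are non-tree edges. The first-order losses, which are linear in $|h|$, therefore dominate. Making this dominance precise gives $F(v+h)<F(v)$ for all $h\neq 0$, which proves strictness.
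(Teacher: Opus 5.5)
Your proposal is correct and follows the paper's proof step for step. For binarity you shift a whole level set $v^{-1}(c)$ uniformly, so a fixed maximum-weight spanning tree persists and the objective is a strictly convex (resp.\ concave) quadratic in the shift; for the cluster-graph and size conditions you raise a missing intra-component edge to $\varepsilon$ and lower an isolated $K_2$ edge to $1-\varepsilon$; and for sufficiency you expand around a spanning tree common to $v$ and $v+h$ so that the linear term beats the quadratic one. When you make that final dominance precise, state the two tree facts it needs exactly: in each clique at least one (not every) minimum-weight edge is off the tree, by the cut property, and dually some maximally raised inter-component edge lies on the tree, by the cycle property. These are exactly what the paper's bound $L(w)\ge 2\min(1,\alpha)\|w\|_\infty$ tacitly relies on.
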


\begin{proof}[Sketch]
Write $E_1 := \supp(v)$. If $v$ is a local minimiser, one can show that $v$ must be binary ($v \in \{0,1\}$), otherwise a small perturbation would decrease the concave quadratic score. Furthermore, if any component were not a clique, adding an edge would create a cycle and decrease the cycle term. Conversely, if $v$ is a union of cliques, any small admissible perturbation increases the objective by either weakening the spanning tree or reducing the cycle-persistence terms. (A full proof is provided in Appendix~\ref{appendix:proofs}.)
\end{proof}

\begin{remark}[A topological graph descriptor of independent interest]
\label{rem:cp-independent-interest}
Although we have introduced $\CP_\alpha$ as an NMF regulariser, the construction is of independent interest as a persistent-homology-based feature for edge-weighted graphs, and to the best of our knowledge it is new. \Cref{prop:kruskal-representation} expresses it in closed form through a maximum-weight spanning tree, so it is exactly and efficiently computable without explicit persistence-diagram software, while \Cref{thm:cp-local-minimizers} characterises its minimisers as cluster graphs (disjoint unions of cliques). Together, these results turn $\CP_\alpha$ into a differentiable, clique-sensitive scalar descriptor of an edge-weighted graph that can be used as a standalone score in graph-learning or community-detection pipelines, independently of the matrix-factorisation context studied here.
\end{remark}

\subsection{Time-series data and latent topology}
\label{subsec:time-series-latent-topology}

Our framework also applies to cases where the topological structure is latent rather than directly visible in the support. A prominent example is periodicity in time series, which manifests as a circular structure in a sliding-window embedding.

Let $v \colon \{1, \dots, d\} \to \R$ be a one-dimensional signal. Given a delay parameter $\tau \in \mathbb{N}$ and an embedding dimension $M+1$, the sliding-window embedding is defined as:
\begin{equation}
\SW_{M,\tau}v(t) := \bigl(v(t), v(t+\tau), \dots, v(t+M\tau)\bigr) \in \R^{M+1},
\label{eq:sliding-window-embedding}
\end{equation}
for $t = 1, \dots, d-M\tau$. We centre and normalise these vectors and construct a Vietoris--Rips filtration on the resulting point cloud. In this context, $\Fil(v)$ represents this filtration.

A periodic signal produces a sliding-window cloud that approximates a closed loop, resulting in a prominent $1$-dimensional persistence class. In contrast, trend-like or irregular signals yield clouds resembling arcs or diffuse clusters, where $H_1$ persistence is negligible. Following \citet{perea2015sliding}, we define the periodicity score as:
\begin{equation}
\PerS_{M,\tau}(v) := 
\begin{cases}
\frac{1}{\sqrt{3}} \max_{(b,d) \in \PH_1(v)} (d-b), & \text{if } \PH_1(v) \neq \varnothing, \\
0, & \text{otherwise}.
\end{cases}
\label{eq:periodicity-score}
\end{equation}
The factor $1/\sqrt{3}$ normalises the score to $[0,1]$. To encourage strong periodicity in selected basis functions, we employ the target-based penalty:
\[
\mathcal{Q}_{\mathrm{per}}(v; a) := \bigl(\PerS_{M,\tau}(v) - a\bigr)^2,
\qquad a \in [0,1].
\]
Setting $a=1$ promotes oscillatory behaviour, while $a=0$ suppresses it.

\section{Optimisation of \TopNMF}
\label{sec:optimisation}

The topological scores introduced in \cref{sec:data-type-topological-scores} transform the NMF objective into a non-smooth optimisation problem. This section details how this problem is formulated in a scale-consistent manner, presents a practical projected subgradient scheme, and establishes the resulting stationarity guarantees.

\subsection{Scale-normalised topological objective}
\label{subsec:scale-normalised-objective}

Several of the scores defined in \cref{sec:data-type-topological-scores}, most notably $\WTP_p^{(0)}$ and $\CP_{\alpha}$, are naturally defined for functions taking values in $[0, 1]$. This requirement is compatible with the NMF framework because the factorisation $\W\V$ is invariant under the reciprocal rescaling of the coefficients and the corresponding basis rows (see \cref{prop:row-rescaling-invariance}). For a basis row $v_j \in \Rnn^d$, we define the normalised row as follows:
\begin{equation}
\bar v_j
:=
\frac{v_j}{\|v_j\|_{\infty} + \varepsilon_{\mathrm{norm}}},
\qquad
\varepsilon_{\mathrm{norm}} > 0,
\label{eq:normalised-row}
\end{equation}
where the small constant $\varepsilon_{\mathrm{norm}}$ ensures numerical stability by avoiding division by zero and maintains the global Lipschitz continuity of the mapping on bounded sets. The regularised objective is then expressed as:
\begin{equation}
F(\W, \V)
:=
\Lapx(\X, \W\V)
+
\lambdatop
\sum_{j \in \mathcal{J}_{\mathrm{top}}}
\mathcal{Q}_j(\bar v_j),
\label{eq:normalised-topnmf-objective}
\end{equation}
where $\mathcal{Q}_j$ denotes any of the scalar scores from \cref{sec:data-type-topological-scores}. Specifically, one may choose:
\[
\mathcal{Q}_j \in
\left\{
\TP^{(0)}, \;
\TP^{(1)}, \;
\WTP_p^{(0)}, \;
\CP_{\alpha}, \;
\mathcal{Q}_{\mathrm{per}}(\,\cdot\,;a_j)
\right\},
\]
where $\mathcal{Q}_{\mathrm{per}}(v;a_j) := (\PerS_{M, \tau}(v)-a_j)^2$.
For the periodicity score, we similarly employ an $\varepsilon$-regularised normalisation within the sliding-window construction to ensure the score remains well-defined for nearly constant windows.

\begin{proposition}
\label[proposition]{prop:row-rescaling-invariance}
Let $\W \in \Rnn^{n \times r}$ and $\V \in \Rnn^{r \times d}$. For any diagonal matrix $D = \operatorname{diag}(c_1, \dots, c_r)$ with $c_j > 0$, define $\widetilde{\W} := \W D$ and $\widetilde{\V} := D^{-1}\V$. Then $\widetilde{\W}\widetilde{\V} = \W\V$. Consequently, up to the $\varepsilon_{\mathrm{norm}}$-regularisation, evaluating the topological score on normalised basis rows does not restrict the expressive power of the factorisation model.
\end{proposition}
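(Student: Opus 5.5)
The first claim is pure matrix algebra. Since every $c_j>0$, the matrix $D$ is invertible with $D^{-1}=\operatorname{diag}(c_1^{-1},\dots,c_r^{-1})$. Associativity then gives
\[
\widetilde{\W}\widetilde{\V} = (\W D)(D^{-1}\V) = \W(DD^{-1})\V = \W\V .
\]
Both $\widetilde{\W}$ and $\widetilde{\V}$ stay non-negative, because multiplying a column of $\W$ by $c_j>0$, or a row of $\V$ by $c_j^{-1}>0$, preserves signs. So $(\widetilde{\W},\widetilde{\V})$ is a feasible NMF pair with the same product, and therefore with the same reconstruction loss $\Lapx$.

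For the ``consequently'' part, I would make precise what ``does not restrict expressive power'' means. Fix any feasible pair $(\W,\V)$. Choose $c_j := \|v_j\|_\infty$ for the non-zero rows and $c_j := 1$ for the zero rows. With this choice, every non-zero row of $\widetilde{\V}$ has sup-norm exactly $1$, so it lies in $[0,1]^d$ and is admissible. By the first part, the product $\W\V$, and hence the set of representable approximations of $\X$, is unchanged. In other words, the map $(\W,\V)\mapsto\W\V$ has the same image whether or not we restrict to row-normalised $\V$.

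Next I would note a scale-invariance property. The normalisation \eqref{eq:normalised-row} with $\varepsilon_{\mathrm{norm}}=0$ is invariant under $v_j\mapsto c_j^{-1}v_j$, since
\[
\frac{c_j^{-1}v_j}{\|c_j^{-1}v_j\|_\infty} = \frac{v_j}{\|v_j\|_\infty}.
\]
Hence each score $\mathcal{Q}_j(\bar v_j)$ depends only on the ``direction'' of $v_j$. When $\varepsilon_{\mathrm{norm}}>0$, $\bar v_j$ differs from this exact normalisation only by the factor $\|v_j\|_\infty/(\|v_j\|_\infty+\varepsilon_{\mathrm{norm}})$. This factor tends to $1$ as $\varepsilon_{\mathrm{norm}}\to0$ whenever $v_j\neq0$, which is the sense of the qualifier ``up to the $\varepsilon_{\mathrm{norm}}$-regularisation''.

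The only delicate step is stating this last point honestly, since it is interpretive rather than a sharp theorem. The main obstacle is handling zero rows and the $\varepsilon$ discrepancy. I would treat zero rows separately: they contribute nothing to $\W\V$, and $\bar v_j=0$ for them. I would not claim exact invariance of $F$ when $\varepsilon_{\mathrm{norm}}>0$. Instead I would assert only the exact product invariance, together with the exact invariance of the unregularised normalisation.
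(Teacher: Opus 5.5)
Your proposal is correct and follows the same route as the paper, which gives no separate proof. The paper treats the identity $(\W D)(D^{-1}\V)=\W(DD^{-1})\V=\W\V$ as immediate, and in the remark after the proposition it also concedes that for $\varepsilon_{\mathrm{norm}}>0$ the normalised score is only approximately scale invariant. Your explicit choice $c_j=\|v_j\|_\infty$ and your separate handling of zero rows make the paper's informal ``consequently'' clause precise.
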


For $\varepsilon_{\mathrm{norm}} > 0$, the normalised score is not exactly scale invariant: it remains weakly sensitive to the magnitude of $\V$ prior to normalisation. To prevent the coefficients $\W$ from growing unboundedly while $\V$ is rescaled, practical implementations add an $L_2$ regularisation term on $\W$.

\subsection{Projected subgradient updates}
\label{subsec:projected-subgradient-updates}

Since the topological term depends exclusively on the basis matrix $\V$, the update for the coefficient matrix $\W$ proceeds via the standard projected gradient step for the reconstruction loss. For the basis update, the smooth gradient of the reconstruction loss is combined with a subgradient of the topological term. Let $[\cdot]_+$ denote the entry-wise projection onto the non-negative orthant $\Rnn$. The iterations are defined as:
\begin{align}
\W^{(t+1)}
&=
\Bigl[
\W^{(t)}
-
\eta_t \nabla_{\W}\Lapx(\X, \W^{(t)}\V^{(t)})
\Bigr]_+,
\label{eq:w-update-topnmf}
\\
\V^{(t+1)}
&=
\Bigl[
\V^{(t)}
-
\eta_t
\Bigl(
\nabla_{\V}\Lapx(\X, \W^{(t)}\V^{(t)})
+
\lambdatop G^{(t)}
\Bigr)
\Bigr]_+,
\label{eq:v-update-topnmf}
\end{align}
where the gradients of the reconstruction loss are given by $\nabla_{\W}\Lapx = 2(\W\V-\X)\V^\top$ and $\nabla_{\V}\Lapx = 2\W^\top(\W\V-\X)$. The $j$th row of the subgradient matrix $G^{(t)}$ is selected such that:
\[
g_j^{(t)} \in
\partial^{C}\bigl(\mathcal{Q}_j \circ N\bigr)\bigl(v_j^{(t)}\bigr),
\qquad
N(v):=\frac{v}{\|v\|_{\infty}+\varepsilon_{\mathrm{norm}}},
\]
for $j \in \mathcal{J}_{\mathrm{top}}$, with $g_j^{(t)}=0$ for unregularised rows. Here $\partial^{C}$ denotes the Clarke subdifferential, which exists and is non-empty for the locally Lipschitz objectives considered below.

\paragraph{Subgradient computation.}
For the support-based scores $\TP^{(k)}$ and $\WTP_p^{(0)}$, once the persistence pairing is established, the birth and death times are determined by specific critical cell values. Consequently, the score becomes a smooth function of these values within a local neighbourhood. Recent results on differentiable persistence demonstrate that this construction yields Clarke subgradients for such terms \citep{carriere2021optimizing,leygonie2022framework}. For the graph score, \cref{prop:kruskal-representation} provides an explicit expression: provided the maximum-weight spanning tree $T(\bar v)$ is locally constant, we have:
\begin{equation}
\CP_{\alpha}(\bar v)
=
-\sum_{e \in T(\bar v)} (1-\bar v(e))^2
-\alpha \sum_{e \notin T(\bar v)} \bar v(e)^2.
\label{eq:cp-piecewise-quadratic}
\end{equation}
The gradient with respect to the normalised edge weights is then:
\begin{equation}
\frac{\partial \CP_{\alpha}}{\partial \bar v(e)}
=
\begin{cases}
2(1-\bar v(e)),
& e \in T(\bar v),
\\
-2\alpha \bar v(e),
& e \notin T(\bar v).
\end{cases}
\label{eq:cp-piecewise-gradient}
\end{equation}
At points of weight ties, spanning-tree transitions, or persistence-pairing reassignments, the objective is generally non-smooth; in such cases, any element of the Clarke subdifferential may be employed in \eqref{eq:v-update-topnmf}. The subgradient for the periodicity score is computed analogously by applying the chain rule through the sliding-window embedding, centring, normalisation, and the persistence functional.

\subsection{Stationarity under standard tame assumptions}
\label{subsec:stationarity}

The numerical scheme presented above is justified by the following stationarity results, grounded in the theory of tame geometry (o-minimal structures).

\begin{proposition}[Regularity of the objective]
\label[proposition]{prop:objective-regularity}
Assume that each score $\mathcal{Q}_j$ in \eqref{eq:normalised-topnmf-objective} is locally Lipschitz and definable on bounded subsets of its domain. Then the complete objective $F$ is locally Lipschitz and definable on $\Rnn^{n \times r} \times \Rnn^{r \times d}$.
\end{proposition}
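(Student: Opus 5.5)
The proof decomposes $F$ into its two summands and shows each is locally Lipschitz and definable in a fixed o-minimal structure (say $\mathbb{R}_{\mathrm{an}}$ or $\mathbb{R}_{\exp}$, chosen to contain the semialgebraic maps and whatever structure the $\mathcal{Q}_j$ are definable in). The result then follows from two closure properties: local Lipschitz continuity and definability are both preserved under finite sums, compositions, and products with constants.

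The reconstruction term $\Lapx(\X,\W\V)=\|\X-\W\V\|_F^2$ is a polynomial in the entries of $(\W,\V)$. It is therefore semialgebraic, hence definable, and $C^\infty$, hence locally Lipschitz.

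For the topological part, I would write each summand as the composition $\mathcal{Q}_j\circ N\circ \pi_j$.
\begin{itemize}
\item The map $\pi_j\colon(\W,\V)\mapsto v_j$ is a linear projection.
\item The map $N(v)=v/(\|v\|_\infty+\varepsilon_{\mathrm{norm}})$ is semialgebraic, since $\|v\|_\infty=\max_i |v_i|$ is a finite max of semialgebraic functions and division by a function bounded below by $\varepsilon_{\mathrm{norm}}>0$ preserves semialgebraicity. It is locally Lipschitz, because $v\mapsto\|v\|_\infty$ is $1$-Lipschitz, the denominator is at least $\varepsilon_{\mathrm{norm}}$, and a quotient of locally Lipschitz functions with denominator bounded away from zero is locally Lipschitz. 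It is in fact globally Lipschitz on bounded sets.
\item $N$ maps all of $\Rnn^d$ into the bounded set $[0,1)^d$, so the image of any bounded set lies in a bounded subset of the domain of $\mathcal{Q}_j$. On that set the hypothesis gives that $\mathcal{Q}_j$ is locally Lipschitz and definable.
\end{itemize}

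The step needing care is this localisation. The hypothesis is only "on bounded subsets", whereas the conclusion is stated on the whole orthant. For local Lipschitz continuity this is harmless: it is a local property, and every point has a bounded neighbourhood whose image under $N\circ\pi_j$ is contained in a bounded set where $\mathcal{Q}_j$ is locally Lipschitz. A composition of locally Lipschitz maps is locally Lipschitz, using compactness of a closed neighbourhood to pass from pointwise to uniform constants.

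For definability, the key observation is that $N$ maps the entire domain into $[0,1]^d$. It therefore suffices that $\mathcal{Q}_j$ be definable on the bounded set $[0,1]^d$, or on a bounded neighbourhood of it. Then $\mathcal{Q}_j\circ N\circ\pi_j$ is globally definable, since its graph is obtained from definable graphs by projection and intersection. Finally, summing the finitely many terms and adding $\Lapx$, with the nonnegative scalar $\lambdatop$, preserves both properties on $\Rnn^{n\times r}\times\Rnn^{r\times d}$.
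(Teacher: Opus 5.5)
Your proposal is correct and follows the same route as the paper. Both arguments use a polynomial reconstruction term, a piecewise-rational (semialgebraic) normalisation $N$ whose denominator is bounded below by $\varepsilon_{\mathrm{norm}}$, composition with $\mathcal{Q}_j$, and closure of both properties under finite sums. Your observation that $N$ maps $\Rnn^d$ into the bounded set $[0,1)^d$ makes explicit a step the paper glosses over: it explains why a hypothesis stated only on bounded subsets yields local Lipschitz continuity and global definability of $F$ on the whole unbounded orthant.
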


\begin{proof}
The reconstruction term $\Lapx(\X, \W\V)$ is polynomial in the entries of $(\W, \V)$, rendering it smooth and definable. The normalisation map $N$ in \eqref{eq:normalised-row} is piecewise rational with a denominator bounded away from zero by $\varepsilon_{\mathrm{norm}}$, and is thus locally Lipschitz and definable on $\Rnn^d$. By assumption, each composition $\mathcal{Q}_j \circ N$ is locally Lipschitz and definable on bounded sets. Since definability and local Lipschitz continuity are preserved under finite summation, the proposition holds.
\end{proof}

The assumptions in \cref{prop:objective-regularity} correspond to the standard regime in non-smooth optimisation for persistence-based objectives \citep{carriere2021optimizing,leygonie2022framework}.

\begin{theorem}[Stationarity of projected \TopNMF{}]
\label{thm:topnmf-stationarity}
Suppose that \cref{prop:objective-regularity} holds. Let $(\W^{(t)}, \V^{(t)})_{t \ge 0}$ be the sequence generated by updates \eqref{eq:w-update-topnmf}--\eqref{eq:v-update-topnmf}, with step sizes $\eta_t$ satisfying $\eta_t > 0$, $\sum \eta_t = \infty$, and $\sum \eta_t^2 < \infty$. Assuming the iterates remain bounded, every accumulation point of the sequence is a Clarke stationary point of $F$. This conclusion extends to the mini-batch setting where reconstruction gradients are replaced by unbiased estimators with bounded second moments.
\end{theorem}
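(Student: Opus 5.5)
The plan is to cast the projected updates \eqref{eq:w-update-topnmf}--\eqref{eq:v-update-topnmf} as a projected stochastic subgradient method for the constrained problem $\min_{z \in \mathcal{K}} F(z)$. Here $z := (\W,\V)$ and $\mathcal{K} := \Rnn^{n\times r}\times\Rnn^{r\times d}$, a closed convex cone. We then invoke the differential-inclusion framework of \citet{davis2020stochastic}, following its use for persistence objectives in \citet{carriere2021optimizing}.

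\emph{Step 1: rewrite the iteration.} Equivalently we minimise $\Phi := F + \iota_{\mathcal{K}}$, where $\iota_{\mathcal{K}}$ is the indicator of $\mathcal{K}$. The joint direction $(\nabla_{\W}\Lapx, \nabla_{\V}\Lapx + \lambdatop G^{(t)})$ lies in $\partial^C F(z^{(t)})$. To see this, note that $\Lapx$ is $C^1$, so $\partial^C F \subseteq \nabla\Lapx + \lambdatop\,\partial^C(\sum_j \mathcal{Q}_j\circ N)$. Because the row blocks are separable, $\partial^C$ of the sum is contained in the product of the $\partial^C(\mathcal{Q}_j\circ N)(v_j)$. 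For definable locally Lipschitz functions we may use the conservative-field or Whitney-stratification viewpoint: the product of row-wise Clarke subgradients is a conservative field for the sum, and that suffices. The projection $[\cdot]_+$ is the proximal map of $\iota_{\mathcal{K}}$. In the mini-batch case we write the step as $-\eta_t(y_t + \xi_t)$, where $y_t \in \partial^C F(z^{(t)})$ and $\xi_t$ is a martingale-difference noise with bounded conditional second moment.

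\emph{Step 2: verify the assumptions of the Davis et al.\ theorem for proximal stochastic subgradient methods.} These are as follows. Boundedness of iterates is assumed. The step sizes satisfy $\sum\eta_t=\infty$ and $\sum\eta_t^2<\infty$. Subgradients are bounded on bounded sets, which holds because $F$ is locally Lipschitz by \cref{prop:objective-regularity}, so the Clarke subdifferential is locally bounded. The noise condition holds because $\sum\eta_t\xi_t$ converges almost surely by the martingale convergence theorem, using $\sum\eta_t^2<\infty$ and the bounded second moments. Finally, we need a chain rule and the Sard-type condition. Since $F$ is definable by \cref{prop:objective-regularity} and $\iota_{\mathcal{K}}$ is semialgebraic, $\Phi$ is definable. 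Hence $\Phi$ admits the chain rule along absolutely continuous curves and has finitely many Clarke-critical values on bounded sets. This is the Bolte--Daniilidis--Lewis--Shiota projection formula, used exactly as in \citet{davis2020stochastic}.

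\emph{Step 3: conclude.} With these conditions in place, the interpolated iterates are asymptotic pseudo-trajectories of the differential inclusion $\dot z \in -\partial^C F(z) - N_{\mathcal{K}}(z)$. Here $\Phi$ is a Lyapunov function by the chain rule, and its critical values have empty interior. Every limit point therefore satisfies $0\in\partial^C F(z)+N_{\mathcal{K}}(z)$, which is Clarke stationarity of $F$ on $\mathcal{K}$. The main obstacle is Step 1's identification of the row-wise choice of $g_j^{(t)}$ as a legitimate element of the inclusion: the Clarke subdifferential of a sum is only contained in the sum of subdifferentials, not equal to it. I would handle this by working with conservative set-valued fields in the sense of Bolte--Pauwels. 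Row-wise Clarke subdifferentials of definable functions are conservative, sums and products of conservative fields are conservative, and the Davis et al.\ argument goes through verbatim with a conservative field. The cost is that the limit is stationary with respect to that field, which for definable $F$ coincides with Clarke stationarity off a set of measure zero. If this subtlety cannot be fully resolved, I would instead state the result for the conservative field $\nabla\Lapx + \lambdatop\prod_j\partial^C(\mathcal{Q}_j\circ N)$.
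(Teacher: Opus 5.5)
\textbf{Verdict.} Your route is the paper's: its proof is a one-line appeal to \citet{davis2020stochastic} for projected stochastic subgradient methods on tame, locally Lipschitz objectives. Your Steps~2--3 (bounded iterates, step sizes, local boundedness of $\partial^{C}F$, martingale noise, chain rule and Sard-type property from definability, limit points with $0\in\partial^{C}F(z)+N_{\mathcal K}(z)$) spell out the hypotheses that appeal uses.

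\textbf{The gap: your workaround in Step~1.} The failing step is your resolution of the issue you flag as the main obstacle. If you run the argument with a conservative field $D$ in the Bolte--Pauwels sense, you only get $0\in D(z)+N_{\mathcal K}(z)$ at accumulation points. That does not imply Clarke stationarity. In general one only has $\partial^{C}F(z)\subseteq\operatorname{conv}D(z)$, so $D(z)$ can contain $0$ where $\partial^{C}F(z)$ does not. The fact that $D=\{\nabla F\}$ outside a Lebesgue-null set does not help: an accumulation point can lie in that null set, and for a non-smooth objective the kinks are exactly where such points tend to sit. So the claim that $D$-stationarity ``coincides with Clarke stationarity off a set of measure zero'' does not bridge the gap. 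As written, your fallback proves a strictly weaker statement than the theorem.

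\textbf{The missing fact: both inclusions are equalities.} There is in fact no obstacle, because the inclusions you worry about are equalities here.
\begin{itemize}
\item Since $\Lapx$ is $C^1$, hence strictly differentiable, Clarke's sum rule is exact: $\partial^{C}F(\W,\V)=\nabla\Lapx(\W,\V)+\{0\}\times\lambdatop\,\partial^{C}h(\V)$. Here $h(\V):=\sum_{j\in\mathcal J_{\mathrm{top}}}h_j(v_j)$ and $h_j:=\mathcal Q_j\circ N$.
\item Because $h$ is a sum of functions of disjoint variable blocks, $\partial^{C}h(\V)=\prod_j\partial^{C}h_j(v_j)$ exactly. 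To see this, note that $h$ is differentiable at $\V$ if and only if each $h_j$ is differentiable at $v_j$, with $\nabla h=(\nabla h_j)_j$. By Clarke's gradient-limit formula, the set of limiting gradients of $h$ at $\V$ is then the product of the limiting-gradient sets of the $h_j$. Finally, the convex hull of a product of sets is the product of their convex hulls.
\end{itemize}
Hence the update direction $\bigl(\nabla_{\W}\Lapx,\ \nabla_{\V}\Lapx+\lambdatop G^{(t)}\bigr)$ is a genuine element of $\partial^{C}F(\W^{(t)},\V^{(t)})$. This uses the fact that both updates are evaluated at $(\W^{(t)},\V^{(t)})$, i.e.\ a single joint projected step rather than alternating updates. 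The projected-subgradient theorem of \citet{davis2020stochastic} then applies directly with the Clarke subdifferential, giving Clarke stationarity as stated. This is the fact the paper's proof uses implicitly. With this lemma in Step~1, you can drop the conservative-field detour and the fallback statement.
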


\begin{proof}
Since the objective is locally Lipschitz and definable, and the feasible set is closed and convex, the updates correspond to a projected subgradient method for a tame objective. Under the specified step-size conditions and the boundedness of iterates, the convergence theory for projected stochastic subgradient methods on tame functions \citep{davis2020stochastic} guarantees that every accumulation point is a Clarke stationary point. The deterministic case follows as a special instance with vanishing gradient noise.
\end{proof}

\begin{remark}
While the theoretical guarantees of \cref{thm:topnmf-stationarity} apply to projected subgradient descent, in our experiments we employ a projected AdamW-based optimiser to accelerate convergence. Although the convergence of adaptive gradient methods for non-smooth, non-convex functions remains an active area of research, such methods perform well empirically in our setting.
\end{remark}

\section{Experiments}
\label{sec:experiments}

We evaluate whether domain-specific topological priors improve the structure of learned basis functions while preserving the reconstruction fidelity of NMF across images, one-dimensional signals, periodic time series, and edge-weighted graphs. The primary baseline is standard NMF with an identical rank. We also include domain-specific library baselines where appropriate: sparse variants for images and a network-regularised NMF baseline for graphs.
\TopNMF{} employs the objective formulated in \eqref{eq:normalised-topnmf-objective}, with topological scores tailored to each domain. \Cref{tab:experiment-summary} summarises the interpretability prior and the corresponding topological score used in each experiment.

\paragraph{Implementation details.}
All experiments use the accompanying Python implementation.\footnote{https://github.com/shizuo-kaji/TopNMF} 
The optimisation uses a projected AdamW-based training loop with one gradient step per epoch by default, mean-squared reconstruction loss, and entry-wise clamping to enforce non-negativity after each epoch.
The learning rate is set to $0.005$, and training is run for $5000$ epochs.
Cubical persistence is computed with CubicalRipser \citep{kaji2020cubical}, while graph and Vietoris--Rips persistence are computed with GUDHI \citep{gudhi:urm}. 
Standard and sparse NMF baselines use scikit-learn's \texttt{NMF} \citep{scikit-learn}; the graph-regularised baseline uses Nimfa's SNMNMF \citep{zitnik2012nimfa}.
For the initialisation of NMF, nndsvda \citep{boutsidis2008nndsvd} is used.
Unless stated, library parameters are left at their defaults.

\begin{table}[t]
\centering
\small
\begin{tabular}{@{}p{0.14\linewidth}@{\hspace{0.012\linewidth}}p{0.23\linewidth}@{\hspace{0.012\linewidth}}p{0.35\linewidth}@{\hspace{0.012\linewidth}}p{0.19\linewidth}@{}}
\toprule
Domain & Interpretability prior & Topological score & Experiments\\
\midrule
1-D signals & unimodal peaks &
$\WTP_2^{(0)}$ on line superlevel sets (\Cref{subsec:support-based-data}) &
\Cref{subsubsec:exp-1d-unimodal}\\
Images & connected spatial parts &
$\TP^{(k)}$ on cubical superlevel sets (\Cref{subsec:support-based-data}) &
\Cref{subsubsec:exp-bars,subsubsec:exp-hangul}\\
Graphs & clique-like dense components &
$\CP_{\alpha}$ on descending edge filtrations (\Cref{subsec:graph-data}) &
\Cref{subsec:exp-graphs,subsec:exp-graphs-sociopatterns}\\
Time series & prescribed periodic and aperiodic components &
$(\PerS_{M,\tau}-a)^2$ on sliding-window embeddings (\Cref{subsec:time-series-latent-topology}) &
\Cref{subsubsec:exp-periodic-synthetic,subsubsec:exp-mitbih-periodicity}\\
\bottomrule
\end{tabular}
\caption{Interpretability prior and corresponding topological regularisers.}
\label{tab:experiment-summary}
\end{table}

\subsection{Signal bases: support topology on grids}
\label{subsec:exp-support-signals}

We first test whether topological regularisation can recover simple basis
functions whose interpretability is expressed through the topology of their
superlevel sets. The same support-filtration construction from
\Cref{subsec:support-based-data} applies to one-dimensional line grids and to
two-dimensional image grids, but the desired topology differs by domain. For
one-dimensional signals, we encourage unimodal peaks by penalising persistent
secondary local maxima. For images, we encourage connected spatial parts by
penalising disconnected or looped superlevel supports. We therefore begin with
a controlled one-dimensional peak-recovery experiment and then move to two
image experiments: synthetic bar atoms with known ground truth and Hangul
glyphs.

\subsubsection{Synthetic one-dimensional signal: unimodal peaks}
\label{subsubsec:exp-1d-unimodal}

This experiment applies the support-filtration viewpoint to a one-dimensional
domain, targeting the recovery of unimodal peaks common in spectroscopic or
chromatographic data. We compute the vertex-based superlevel filtration of each
basis row on a grid \(\{1,\dots,100\}\). In this context, secondary local
maxima generate finite \(0\)-dimensional persistence classes, which we penalise
using \(\WTP_2^{(0)}\).

We generate four ground-truth Gaussian atoms and six non-negative mixtures with added detector-style noise; the atoms and mixtures are shown in \Cref{fig:signal-unimodal-data}. We use rank \(r=4\) and regularise with:
\[
\Ltop(\V) = \sum_{j=1}^{4}\WTP_2^{(0)}(\bar v_j)
\]
with \(\lambda_{\rm top}=0.5\).
\Cref{fig:signal-unimodal-bases} demonstrates the characteristic trade-off: while standard NMF achieves a lower RMSE by spreading multiple peaks across bases, \TopNMF{} yields slightly higher error but recovers the intended unimodal atoms with significantly higher accuracy. Concretely, the atom-recovery score---the mean cosine similarity to the ground-truth atoms under the optimal Hungarian assignment---is 0.997 for \TopNMF{} versus 0.772 for standard NMF, while the corresponding reconstruction RMSEs are 0.0066 and 0.0033. The peak-position mean absolute error remains low for both methods ($0.0028$ for \TopNMF{} versus $0.0034$ for standard NMF), showing that the topological simplification does not displace the dominant peak locations.

\begin{figure}[t]
\centering
\includegraphics[width=0.78\linewidth]{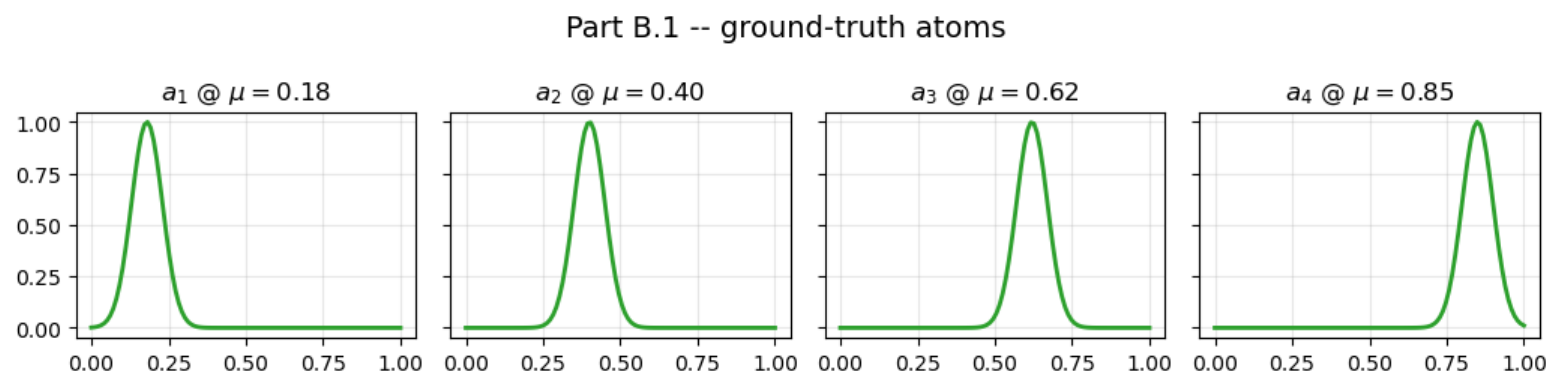}

\vspace{0.5em}

\includegraphics[width=\linewidth]{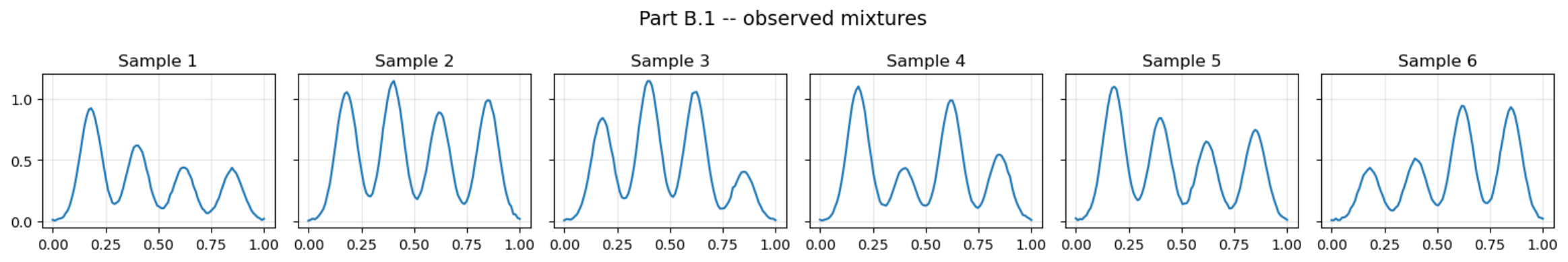}
\caption{One-dimensional unimodal-peak experiment. Top: the four ground-truth
Gaussian atoms. Bottom: the six observed non-negative mixtures. Each observation
contains several peaks, so reconstruction alone does not force the learned
bases to correspond to individual unimodal components.}
\label{fig:signal-unimodal-data}
\end{figure}

\begin{figure}[t]
\centering
\includegraphics[width=\linewidth]{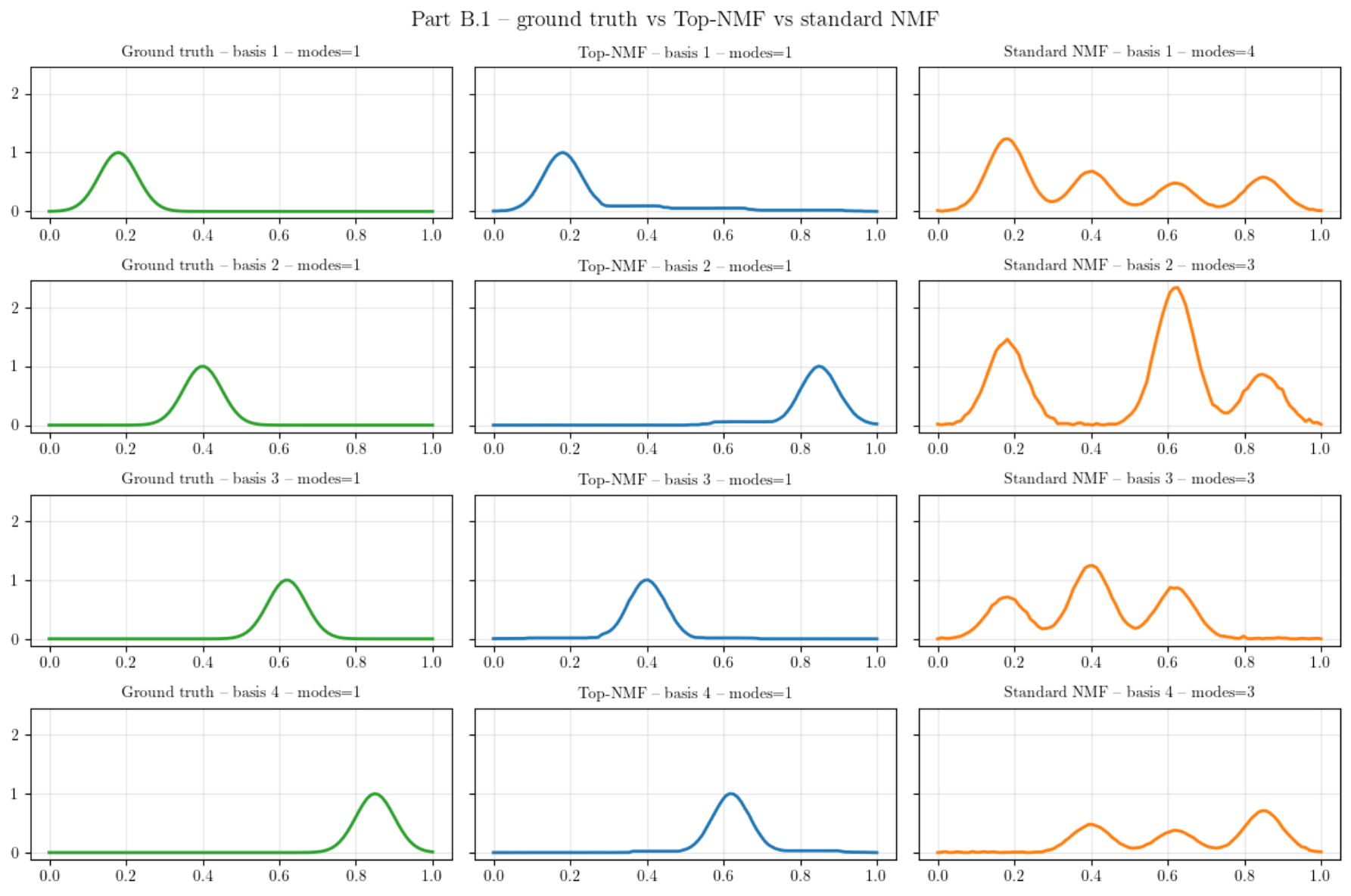}
\caption{Basis comparison for the one-dimensional signal experiment. The
\TopNMF{} bases are all unimodal, whereas standard NMF spreads several peaks
across each basis. The number of detected modes shown in the panel titles is
computed by peak detection with relative height and prominence thresholds.}
\label{fig:signal-unimodal-bases}
\end{figure}

\subsubsection{Synthetic image data with bar atoms}
\label{subsubsec:exp-bars}

We next consider spatially coherent parts on a two-dimensional image grid. To
evaluate basis recovery where ground truth is available, we construct a
mixture problem on a \(12\times 12\) image grid. The ground-truth atoms comprise
six binary single-bar images: three horizontal bars (rows \(2, 6, 9\)) and
three vertical bars (columns \(2, 6, 9\)). Each atom is topologically trivial
(the Betti numbers are \(\beta_0=1\), \(\beta_1=0\)).
We generate \(30\) samples, each formed by summing a random subset of three to five atoms with mixing coefficients drawn uniformly from \([0.7, 1.0]\); the resulting pixel values are then clipped to \([0, 1]\). 

We set the rank to \(r=6\) to match the ground-truth atoms. Each basis row is regularised for dimension 0 and 1 topology (connected, loop-free) by
\[
\Ltop(\V) = \sum_{k=0}^{1} \sum_{j=1}^{6}\TP^{(k)}(\bar v_j),
\]
using the normalised basis row from \eqref{eq:normalised-row}. We optimise the objective with \(\lambdatop=1\).

As illustrated in \Cref{fig:bars-bases}, \TopNMF{} recovers each ground-truth bar as a single basis function, whereas standard NMF produces fragmented bases that mix multiple atoms. The dataset is shown in \Cref{fig:bars-dataset}. Quantitative diagnostics reinforce this result: the atom-recovery score is higher for \TopNMF{} (0.989) than for both standard NMF (0.841) and sparse NMF with $\alpha_H=0.01$ (0.968). The topological simplicity of the learned bases is reflected in the average Betti numbers of their superlevel supports (ground-truth $\beta_0=1$, $\beta_1=0$): \TopNMF{} and sparse NMF attain $\beta_0=1.0$, matching the ground truth and the sparse baseline, while standard NMF attains $\beta_0=1.67$, confirming the visible fragmentation; all three methods satisfy $\beta_1=0$. The reconstruction RMSE of \TopNMF{} is higher than standard NMF and sparse NMF (\TopNMF{} 0.0733, standard NMF 0.0544, sparse NMF 0.0358), reflecting the intended trade-off between reconstruction and structured bases. This experiment also speaks to the perennial problem of choosing the number of components \(r\): as \Cref{fig:bars-ncomponents} shows, the reconstruction error decreases steadily past the true rank and flattens once the true number of atoms (\(r=6\)) is reached.

\begin{figure}[t]
\centering
\begin{minipage}[t]{0.48\linewidth}
\centering
\includegraphics[width=\linewidth]{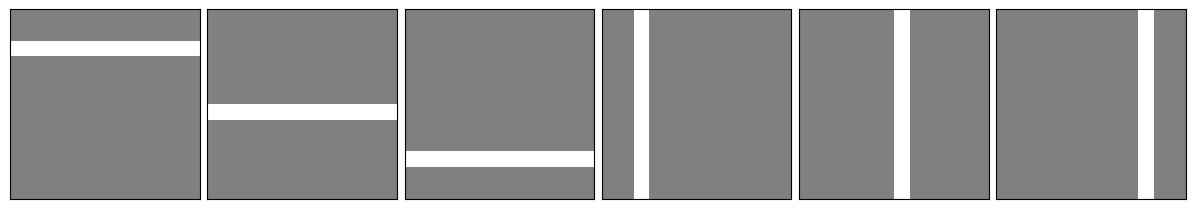}
\small Ground-truth atoms
\end{minipage}
\hfill
\begin{minipage}[t]{0.48\linewidth}
\centering
\includegraphics[width=\linewidth]{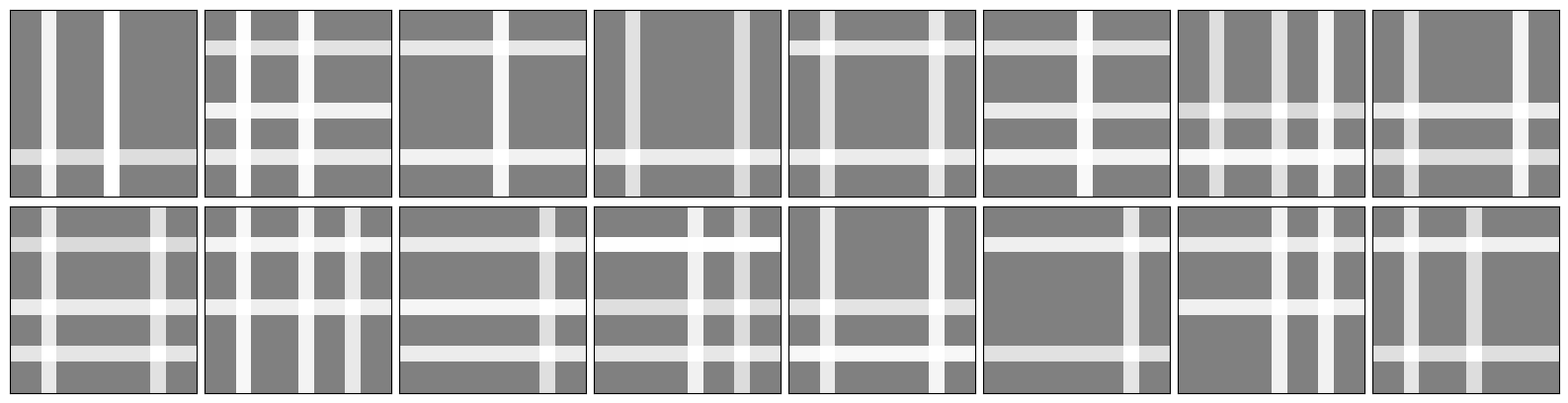}
\small Sample mixtures
\end{minipage}
\caption{Synthetic bar-atom experiment. Left: the six ground-truth atoms,
each a single connected bar on the \(12\times 12\) grid. Right: a random
selection of training samples, formed by summing three to five atoms with
randomly perturbed weights.}
\label{fig:bars-dataset}
\end{figure}

\begin{figure}[t]
\centering
\begin{minipage}[t]{0.48\linewidth}
\centering
\includegraphics[width=\linewidth]{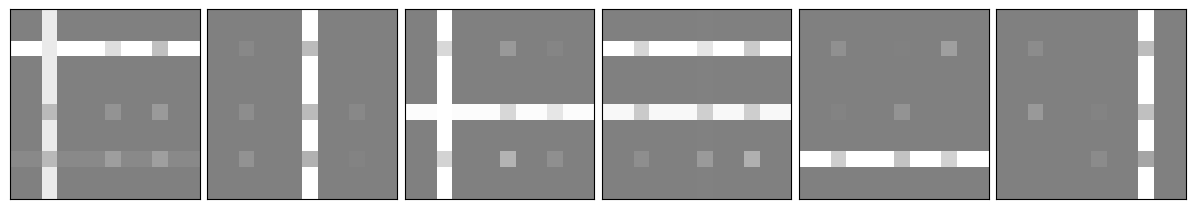}
\small Standard NMF bases
\end{minipage}
\hfill
\begin{minipage}[t]{0.48\linewidth}
\centering
\includegraphics[width=\linewidth]{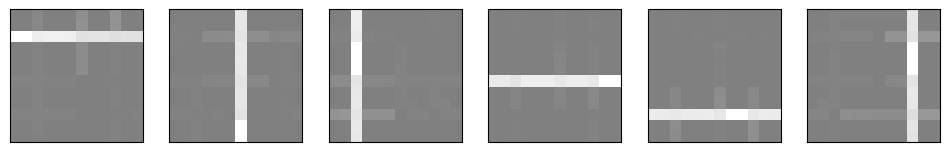}
\small \TopNMF{} bases
\end{minipage}
\caption{Bases learned on the synthetic bar data with rank \(r=6\). Standard
NMF mixes overlapping atoms and leaks ``ghost'' intersection pixels into
individual bases, whereas \TopNMF{} drives every basis toward a single
connected bar that aligns with one ground-truth atom.}
\label{fig:bars-bases}
\end{figure}

\begin{figure}[t]
\centering
\includegraphics[width=0.7\linewidth]{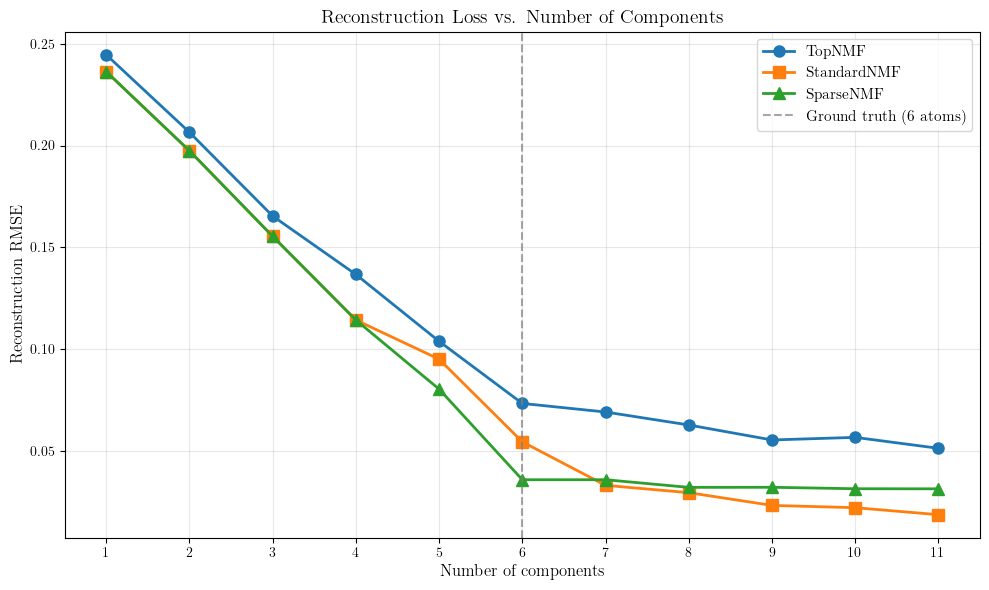}
\caption{Reconstruction RMSE as a function of the number of components \(r\) on
the synthetic bar data.}
\label{fig:bars-ncomponents}
\end{figure}

\subsubsection{Hangul glyphs}
\label{subsubsec:exp-hangul}

We apply the framework to font glyphs, where interpretable structures (reusable strokes) are expected. The dataset contains six \(64\times 64\) greyscale images of Hangul syllables, each formed by the additive superposition of one consonant and one vowel template (\cref{fig:hangul-dataset}). The target basis functions should ideally correspond to strokes with connected high-intensity supports.

Using a rank of \(r=5\), we regularise the basis functions on the \(64\times 64\) cubical complex using \(\TP^{(0)}\) with \(\lambda_{\rm top}=0.7\). 

As shown in \Cref{fig:hangul-bases}, the topological penalty encourages the bases to align with individual connected parts. In contrast, standard NMF produces mixed components despite achieving lower reconstruction error.
The accompanying implementation again uses a vertex-based superlevel cubical complex. We compare \TopNMF{} against both standard NMF and sparse NMF.
Quantitatively, the average Betti numbers of the basis supports highlight the same gap as the bar experiment: \TopNMF{} reaches $\beta_0=1.0$, exactly the connectedness target expected for individual strokes, whereas both standard NMF and sparse NMF leave the bases fragmented at $\beta_0=1.8$ and $1.6$ respectively. The reconstruction RMSEs are \TopNMF{} $0.0371$, standard NMF $0.0$, and sparse NMF $0.0362$.

\begin{figure}[t]
\centering
\includegraphics[width=0.28\linewidth]{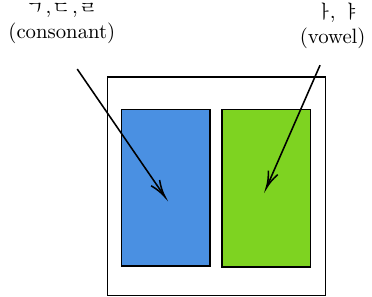}
\hfill
\includegraphics[width=0.66\linewidth]{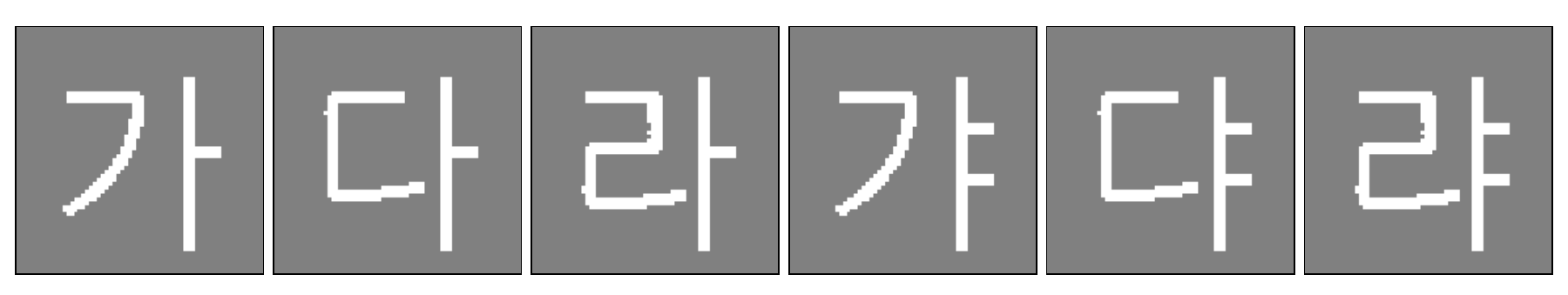}
\caption{Hangul image experiment. Left: schematic decomposition into consonant
and vowel parts. Right: the six \(64\times64\) images used to form the data
matrix.}
\label{fig:hangul-dataset}
\end{figure}

\begin{figure}[t]
\centering
\begin{minipage}[t]{0.48\linewidth}
\centering
\includegraphics[width=\linewidth]{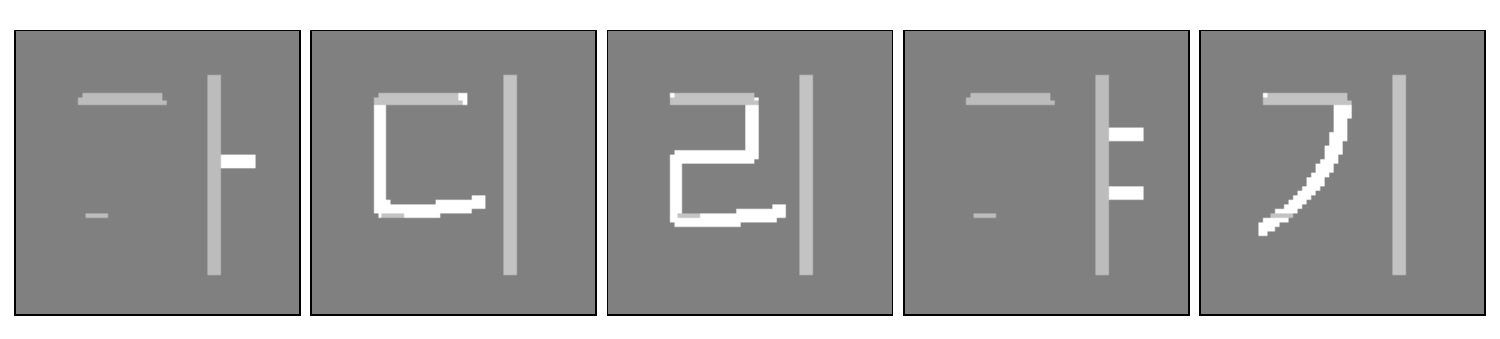}
\small Standard NMF bases
\end{minipage}
\hfill
\begin{minipage}[t]{0.48\linewidth}
\centering
\includegraphics[width=\linewidth]{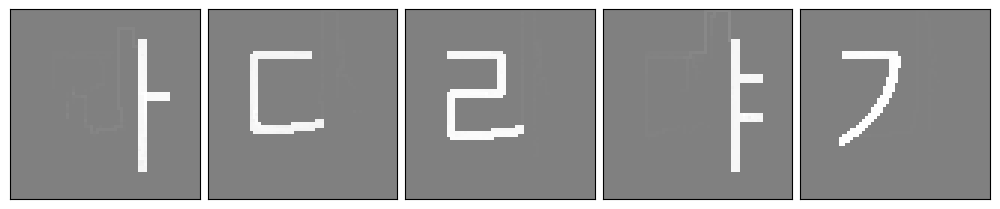}
\small \TopNMF{} bases
\end{minipage}

\vspace{0.75em}

\begin{minipage}[t]{0.48\linewidth}
\centering
\includegraphics[width=\linewidth]{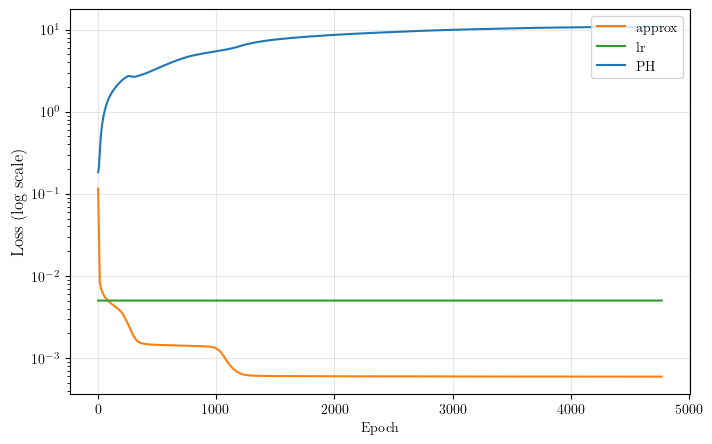}
\small loss trajectory with $\lambda_{\rm top}=0$
(as a proxy of standard NMF)
\end{minipage}
\hfill
\begin{minipage}[t]{0.48\linewidth}
\centering
\includegraphics[width=\linewidth]{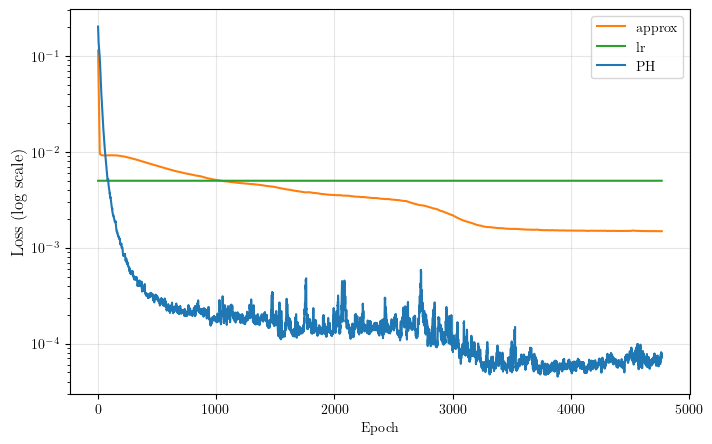}
\small loss trajectory with $\lambda_{\rm top}=0.7$
\end{minipage}
\caption{Basis comparison for the Hangul experiment. Standard NMF produces
bases that mix consonant and vowel regions, whereas \TopNMF{} favours connected
components aligned with the intended parts.}
\label{fig:hangul-bases}
\end{figure}

\subsection{Graph bases: clique-like latent components}
Following the setting in \Cref{subsec:graph-data},
we consider edge-weighted graph data, where each observation is a function \(x_i \colon E \to \Rnn\). 

\subsubsection{Synthetic data with clique templates}
\label{subsec:exp-graphs}

We consider a nine-vertex complete graph.
We test whether \(\CP_{\alpha}\) can recover latent clique structure from mixtures of three overlapping clique templates on vertices \((1,2,3,4,5)\), \((5,6,7)\), \((6,7,8,9)\) on a nine-vertex graph, illustrated in \Cref{fig:graph-data}. We use \(r=3\), \(\lambda_{\rm top}=0.01\), and \(\alpha=1\). As shown in \Cref{fig:graph-main-results}, \TopNMF{} successfully separates the three clique-like components, whereas standard NMF produces mixed supports. 
Quantitatively, \TopNMF{} attains an atom-recovery score of $1.000$ against $0.925$ for standard NMF; the reconstruction relative error is $6\times 10^{-4}$ for \TopNMF{} versus $1\times 10^{-4}$ for standard NMF.

\begin{figure}[t]
\centering
\includegraphics[width=0.92\linewidth]{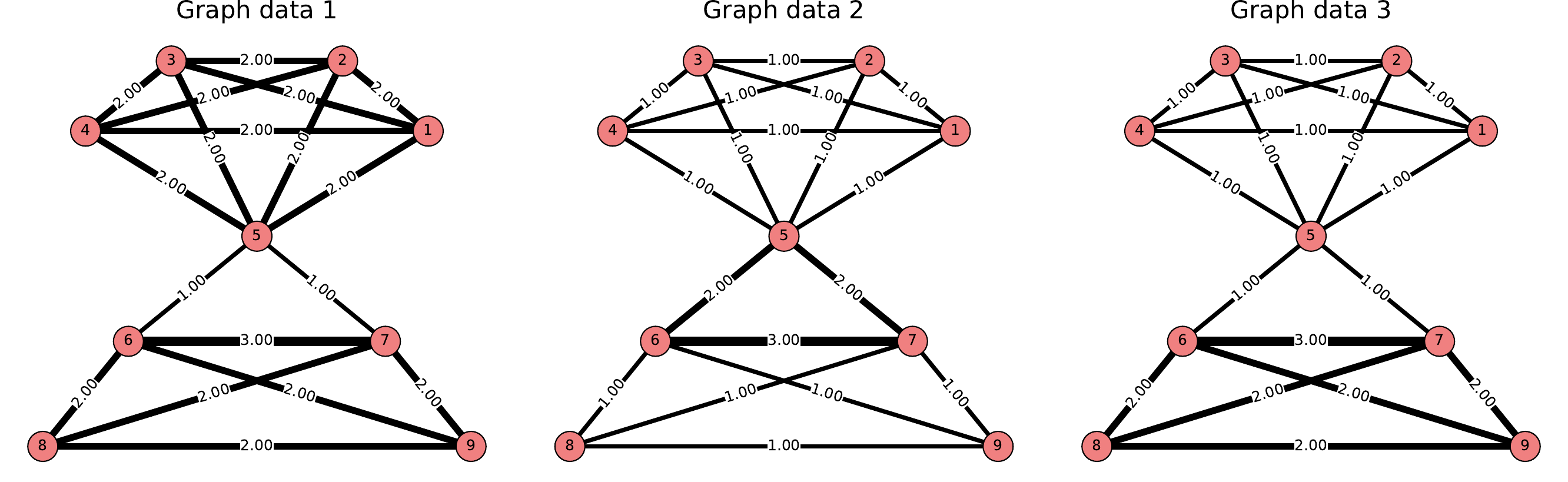}
\caption{Sample observations for the synthetic graph experiment.}
\label{fig:graph-data}
\end{figure}

\begin{figure}[t]
\centering
\begin{minipage}[t]{0.48\linewidth}
\centering
\includegraphics[width=\linewidth]{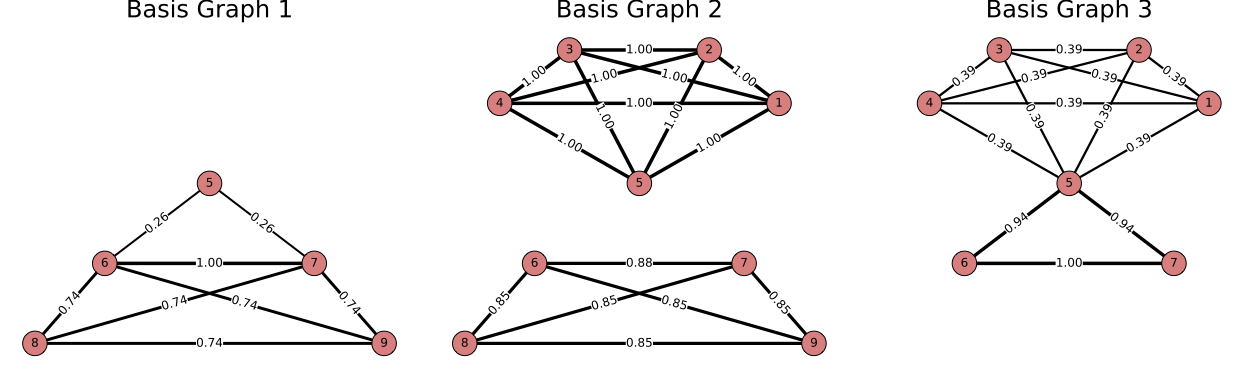}
\small Standard NMF
\end{minipage}
\hfill
\begin{minipage}[t]{0.48\linewidth}
\centering
\includegraphics[width=\linewidth]{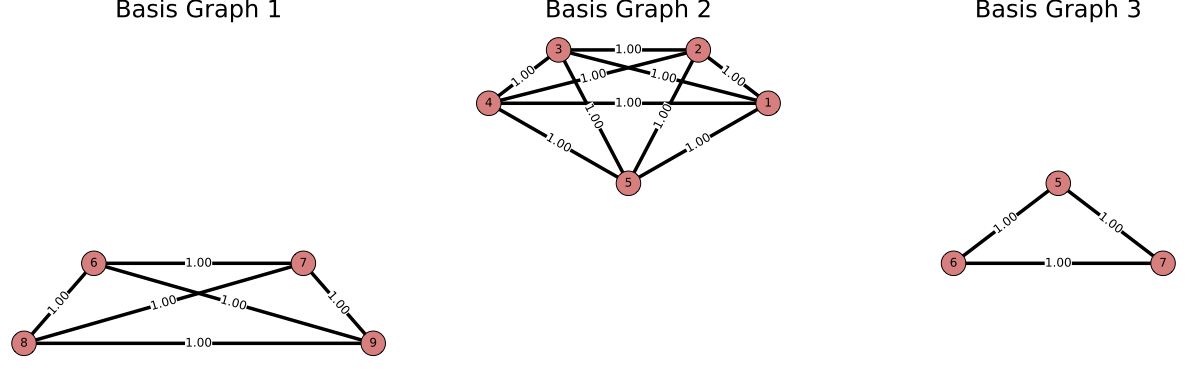}
\small \TopNMF{} with \(\alpha=1\)
\end{minipage}
\caption{Basis graphs learned with \(r=3\). Standard NMF mixes edges from
different latent cliques. \TopNMF{} separates the clique-like components.}
\label{fig:graph-main-results}
\end{figure}

The parameter \(\alpha\) acts as a resolution scale (\Cref{fig:graph-alpha-ablation}, shown here with an over-specified rank \(r=5\)): small \(\alpha\) prefers small cliques, whereas large \(\alpha\) rewards bigger cliques. Moderate \(\alpha\) recovers the three intended clique templates.

\begin{figure}[t]
\centering
\includegraphics[width=0.92\linewidth]{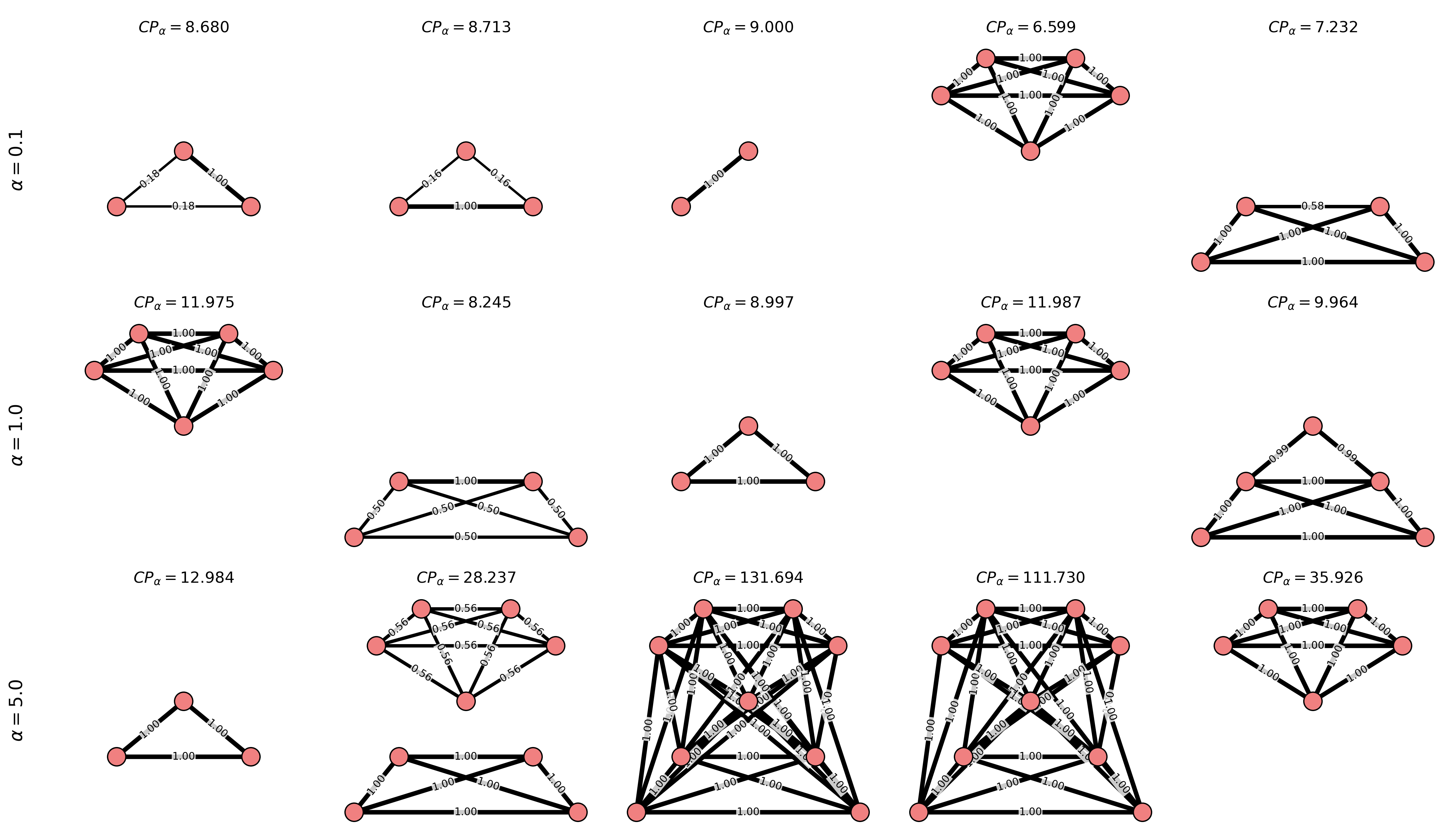}
\caption{Effect of \(\alpha\) in the graph experiment with \(r=5\) basis vectors. 
From top
to bottom: small \(\alpha=0.1\) favours fragmented small structures, moderate
\(\alpha=1.0\) recovers the intended cliques,
and large \(\alpha=5.0\) favours larger structures.}
\label{fig:graph-alpha-ablation}
\end{figure}

\subsubsection{Human contact graphs: SocioPatterns high school}
\label{subsec:exp-graphs-sociopatterns}

As a real-world demonstration, we apply the same edge-filtration construction to the SocioPatterns high-school face-to-face contact data \citep{fournet2014contact}, which records proximity events at $20$~second resolution alongside class labels for each student. We use both the 2011 and 2012 deployments and apply an identical pipeline to each. Crucially, the class labels are withheld during the fitting of both factorisation models; they are used only for \emph{post hoc} evaluation of the learned basis rows.

For each deployment, we count each student's total number of contact events, discard teachers, and retain the eight most active students from every class. 
Each observation is a $30$~minute sliding-window contact-duration graph, advanced in $15$~minute steps. Edge weights correspond to the duration of contact.
Windows with fewer than four contacts inside the selected subgraph are discarded.
The 2011 deployment yields three classes (\texttt{PC}, \texttt{PC*}, \texttt{PSI*}) and 122 graphs on 24 vertices, while the 2012 deployment yields five classes (\texttt{MP*1}, \texttt{MP*2}, \texttt{PC}, \texttt{PC*}, \texttt{PSI*}) and 278 graphs on 40 vertices.

We set the rank $r$ equal to the number of classes and regularise every basis row by the clique-promoting score $\Ltop(\V)=\sum_{j=1}^{r}\CP_{\alpha}(\bar v_j)$ with $\alpha=1$. We compare against standard NMF and the network-regularised SNMNMF. Because SNMNMF was originally formulated for two coupled data matrices with auxiliary similarity networks on their features, we instantiate it in our single-matrix setting by passing the data matrix for both input matrices and using the line graph of the edge set (two edges of $E$ are adjacent if they share a vertex) as the feature-side network. We set \(\gamma=0.02\) and \(\lambda=0.02\) for SNMNMF. Before evaluation, all basis rows are normalised by their maximum edge weight.

Since it is natural to assume that the network structure is compatible with the class labels, we evaluate the result by considering intra- and inter-class edges.
Let
\[
\theta_j(s):=\operatorname{Quantile}_s\bigl(\{v_j(e): v_j(e)>0\}\bigr),
\qquad
E_j(s):=\{e\in E : v_j(e)\ge \theta_j(s)\},
\]
and let $c(u)$ denote the known class label of vertex $u$. The \emph{within-class top-edge mass} is
\[
M_j(s):=
\frac{\sum_{e=\{u,u'\}\in E_j(s)} v_j(e)\,\mathbf{1}\{c(u)=c(u')\}}
{\sum_{e\in E_j(s)} v_j(e)}.
\]
We report the component-wise mean $\frac{1}{r}\sum_{j=1}^r M_j(0.70)$, evaluated at $s=0.70$ so that the metric reflects the top $30\%$ of positive edges in each component; the visualisations in \Cref{fig:sociopatterns-bases} use a tighter threshold $s=0.85$ purely to keep the displayed subgraphs legible.
As visualised in \Cref{fig:sociopatterns-bases}, standard NMF retains a small but visible number of cross-class high-weight edges. Quantitatively, \TopNMF{} attains a mean within-class top-edge mass of $0.995$ versus $0.966$ for standard NMF and $0.968$ for SNMNMF in the 2011 deployment. In 2012 the corresponding scores are $0.999$, $0.947$, and $0.937$. The same ranking is reflected in the optimisation target itself: the mean clique-promoting score $\CP_{\alpha}$ across the learned bases is smaller for \TopNMF{} than for standard NMF in both deployments ($-21.52$ vs $-20.67$ in 2011; $-38.13$ vs $-34.50$ in 2012), confirming that the topological regulariser steers the bases toward clique-like substructures rather than merely improving the class-purity metric. We also verified that this qualitative behaviour is stable across different thresholds $s$.

We emphasise that this comparison targets class-module purity rather than exact class-label recovery for every component.
In each deployment, \TopNMF{} selects bases whose dominant mass is concentrated inside the most active classes (\texttt{PC} and \texttt{PSI*} in 2011; \texttt{MP*1}, \texttt{PSI*}, and \texttt{PC} in 2012).

\begin{figure}[t]
\centering
\begin{minipage}[t]{0.48\linewidth}
\centering
\includegraphics[width=\linewidth]{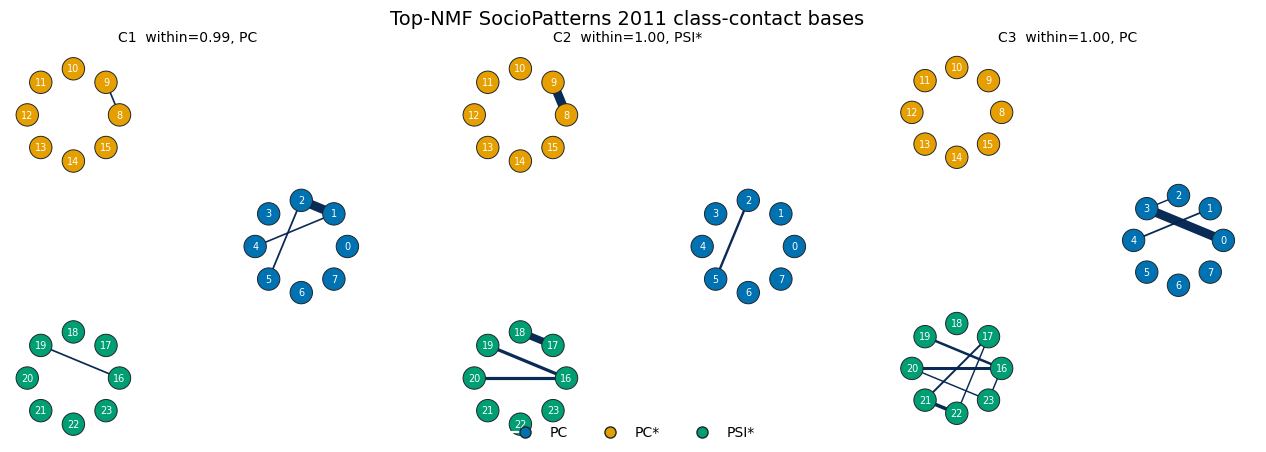}
\small \TopNMF{} (2011)
\end{minipage}
\hfill
\begin{minipage}[t]{0.48\linewidth}
\centering
\includegraphics[width=\linewidth]{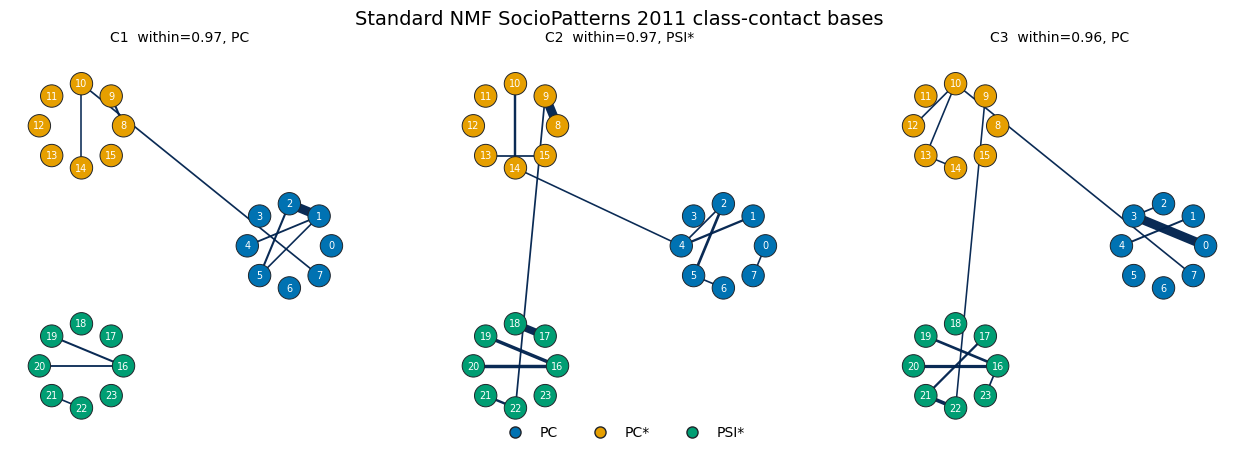}
\small Standard NMF (2011)
\end{minipage}

\vspace{0.5em}

\begin{minipage}[t]{0.48\linewidth}
\centering
\includegraphics[width=\linewidth]{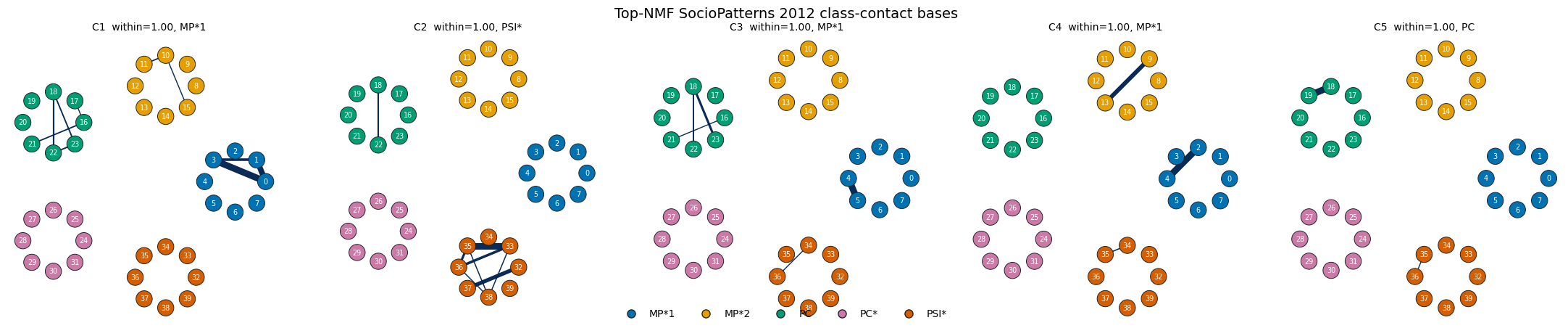}
\small \TopNMF{} (2012)
\end{minipage}
\hfill
\begin{minipage}[t]{0.48\linewidth}
\centering
\includegraphics[width=\linewidth]{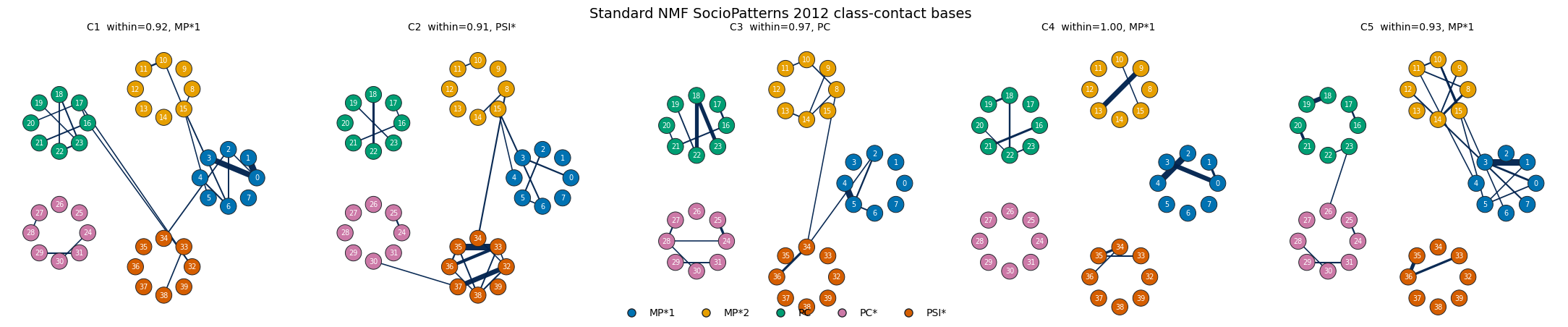}
\small Standard NMF (2012)
\end{minipage}
\caption{Learned contact-graph bases on SocioPatterns. Node colours encode
known school classes; only the displayed high-weight edges above the $0.85$
quantile of positive edge weights in each component are shown. Standard NMF retains a small but visible number of cross-class
high-weight edges, whereas \TopNMF{} concentrates edges of every
basis inside a class block.}
\label{fig:sociopatterns-bases}
\end{figure}

\subsection{Time-series bases: periodic structure versus trend}
\label{subsec:exp-time-series}

This experiment asks whether \TopNMF{} can simultaneously isolate
qualitatively different periodic components from a non-periodic
trend when only their non-negative mixtures are observed, using the 
periodicity score from \Cref{subsec:time-series-latent-topology}. 

\subsubsection{Synthetic periodic components}
\label{subsubsec:exp-periodic-synthetic}

We begin with a simple synthetic dataset. On the time grid $t_i = 2\pi(i-1)/99$, $i=1,\dots,100$, we
construct three ground-truth atoms: a sinusoid $u_1(t)=\cos(2t)+1$, a triangle wave $u_2(t)=\mathrm{tri}(2t)+1$
generated by a sawtooth with width parameter $0.5$, and an aperiodic trend $u_3(t)=t$.
We then form a dataset consisting of four non-negative mixtures,
namely $u_1+u_3$, $u_1+0.4u_2+0.3u_3$, $0.9u_1+1.2u_2+0.3u_3$, and $u_2+u_3$,
and normalise each row by its maximum absolute value. The atoms and mixtures are shown in \Cref{fig:time-series-data}.

We use rank $r=3$ and regularise each basis row
through a target-based periodicity penalty,
\[
\Ltop(\V) = \sum_{j=1}^{3}\bigl(\PerS_{M,\tau}(\bar v_j)-a_j\bigr)^2,
\qquad
(a_1,a_2,a_3)=(0,1,1),
\qquad
M=4,\quad \tau=10,
\]
with $\bar v_j$ the normalised basis row from \eqref{eq:normalised-row}. The
target vector $(0,1,1)$ requests that two bases be maximally periodic and that
one be aperiodic. We set $\lambdatop=10^{-3}$.

\begin{figure}[t]
\centering
\includegraphics[width=\linewidth]{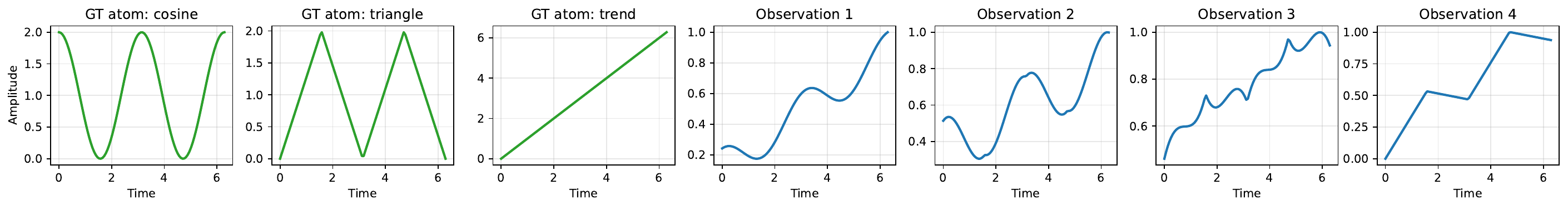}
\caption{Periodic time-series experiment. Left three panels: the
ground-truth atoms. Right four panels: the four
non-negative mixtures used as a dataset.}
\label{fig:time-series-data}
\end{figure}

\begin{figure}[t]
\centering
\includegraphics[width=\linewidth]{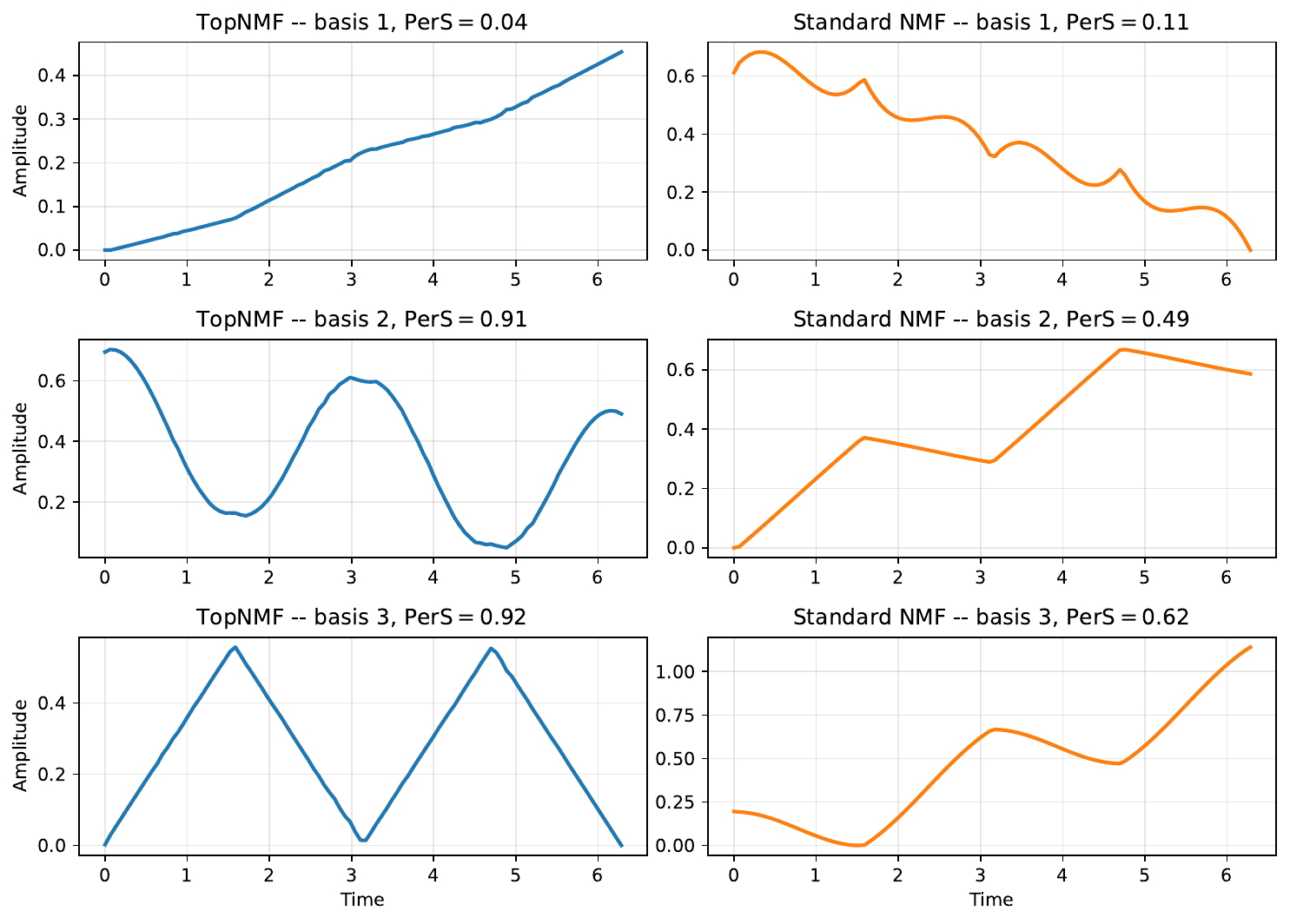}
\caption{Learned bases for the periodic time-series experiment, sorted by the
periodicity score $\PerS_{M,\tau}$ shown in each panel title. \TopNMF{}
respects the target vector $(a_1,a_2,a_3)=(0,1,1)$: one basis collapses to the
trend ($\PerS\approx 0.04$) while the remaining two recover the cosine and
the triangle wave as separate periodic components ($\PerS\approx 0.91$). Standard
NMF leaves all three bases at intermediate periodicity, mixing trend and
oscillatory behaviour.}
\label{fig:time-series-bases}
\end{figure}

\Cref{fig:time-series-bases} shows that \TopNMF{} produces
the desired three-way split: one basis is the linear trend, a second is a
cosine, and the third is a triangle wave. The atom-recovery
score is $0.999$ for \TopNMF{} versus $0.867$ for
standard NMF. The Fourier spectra in
\Cref{fig:time-series-spectra} make the same point in the frequency domain.
In contrast to the other experiments, here the topological
regulariser improves both structure and fit: the reconstruction RMSE is
$0.8\times 10^{-3}$ for \TopNMF{} versus $2.5\times 10^{-3}$ for standard NMF.

\begin{figure}[t]
\centering
\includegraphics[width=\linewidth]{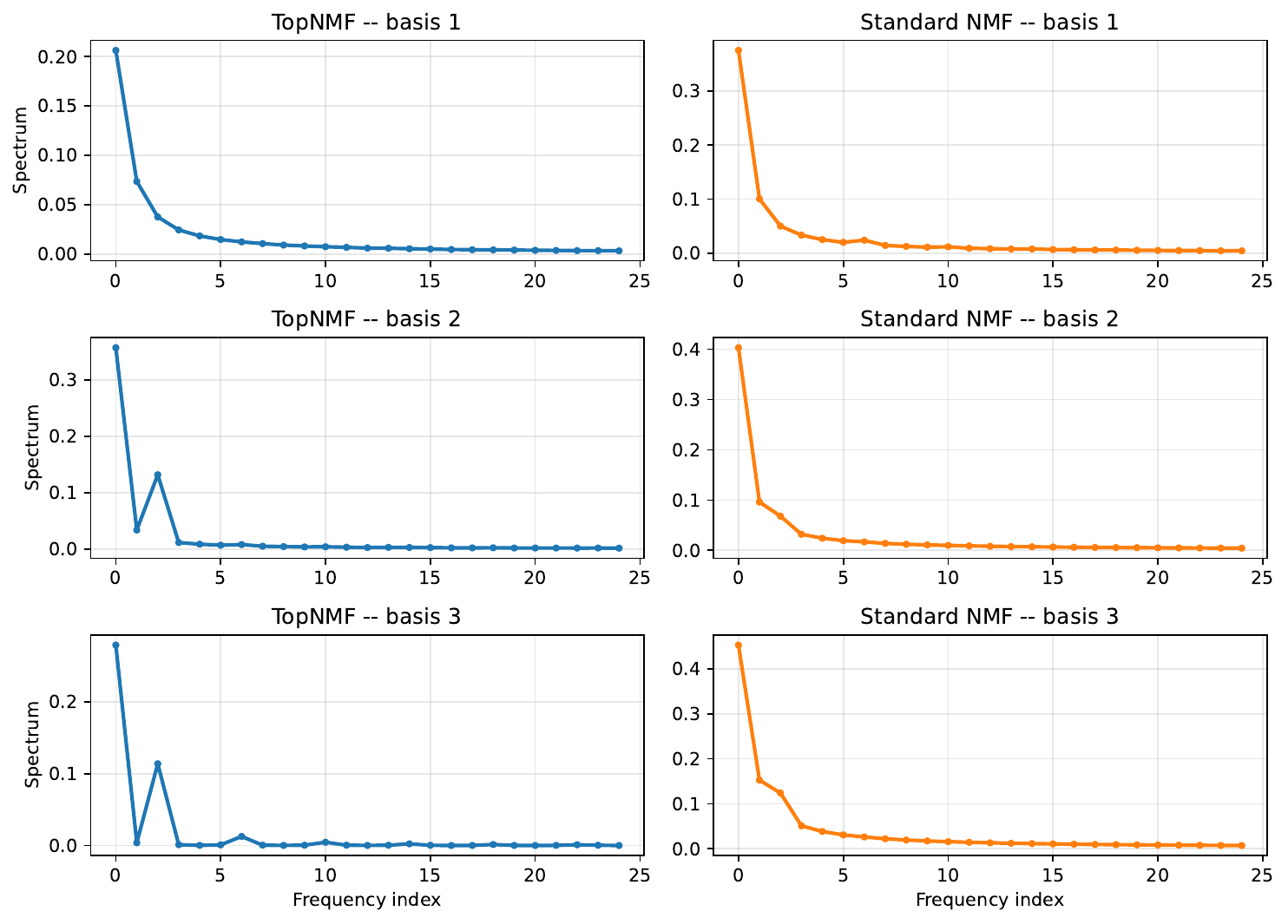}
\caption{Fourier spectra of the learned bases.}
\label{fig:time-series-spectra}
\end{figure}

The targeted score thus supplies a flexible criterion for ``periodic basis''
that does not commit to a fixed sinusoidal dictionary: two waveforms with
identical fundamental period but different harmonic content are separated as
distinct periodic components, while the trend is pushed into the slot whose
target was set to zero.

\subsubsection{ECG recording: MIT-BIH arrhythmia data}
\label{subsubsec:exp-mitbih-periodicity}

As a real-world example of the same periodicity score, we use the MIT-BIH
Arrhythmia Database \citep{moody2001impact,goldberger2000physiobank}, which
contains half-hour two-channel ECG recordings sampled at $360$~Hz with beat
annotations. We analyse record 200, channel 0, because it contains both
regular beats and a substantial number of premature ventricular contractions
(PVCs). A PVC is an ectopic beat whose morphology and timing depart from the
dominant rhythm. We use this setting as a post hoc anomaly-ranking task: after
factorisation, a window should receive a high score if it is poorly explained
by the learned rhythm-like periodic bases. The NMF objective does not use the
PVC labels. Beat annotations are used to centre the windows, and the PVC
symbols are used only to evaluate the resulting scores. After standard preprocessing (baseline
removal, clipping, shifting, and scaling to non-negative values), each
observation is a $4$~second beat-centred excerpt resampled to $96$ features.
This produces $2595$ windows, of which $824$ have a PVC as the central
annotated beat; \Cref{fig:mitbih-ecg-excerpt} shows a representative excerpt.

We fit rank $r=4$ models and use the target vector
$(a_1,a_2,a_3,a_4)=(1,1,0,0)$, asking for two rhythm-like periodic bases and
two low-periodicity residual bases. The periodicity score uses a
four-dimensional sliding-window embedding, corresponding to $M=3$ in
\eqref{eq:sliding-window-embedding}, with delay $\tau=4$ samples. For
\TopNMF{} we set $\lambdatop=10^{-3}$ and use the same standard NMF baseline
as above. Both methods are fitted on a compact subset of $260$ beat-centred
windows and evaluated on all $2595$ windows.

\begin{figure}[t]
\centering
\includegraphics[width=0.92\linewidth]{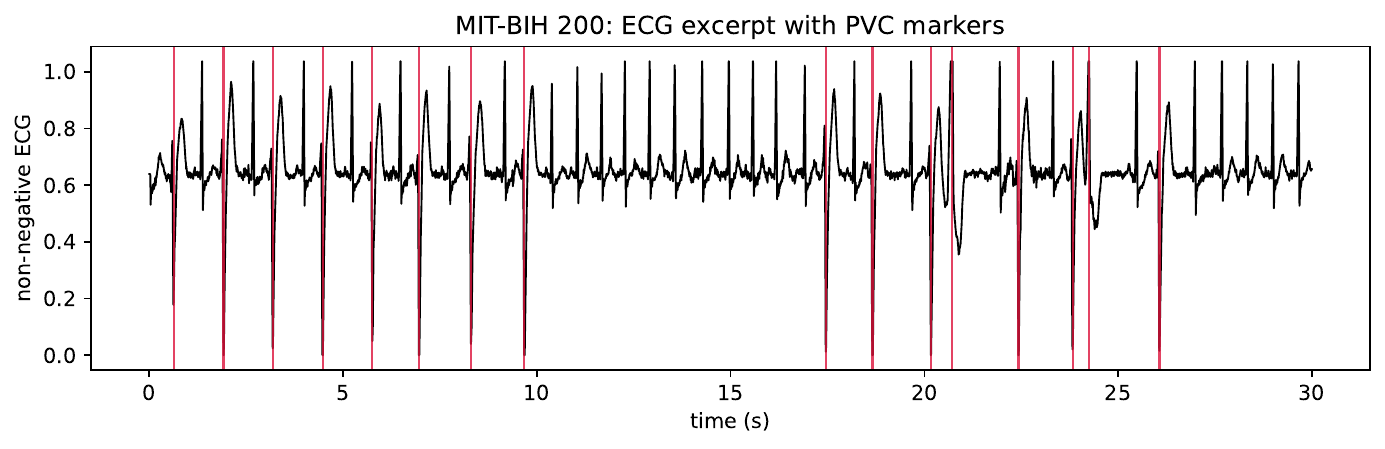}
\caption{MIT-BIH record 200 ECG excerpt after non-negative preprocessing.
Vertical red lines indicate annotated PVC beats; the annotations are used only
for window construction and post hoc evaluation.}
\label{fig:mitbih-ecg-excerpt}
\end{figure}

\begin{figure}[t]
\centering
\includegraphics[width=\linewidth]{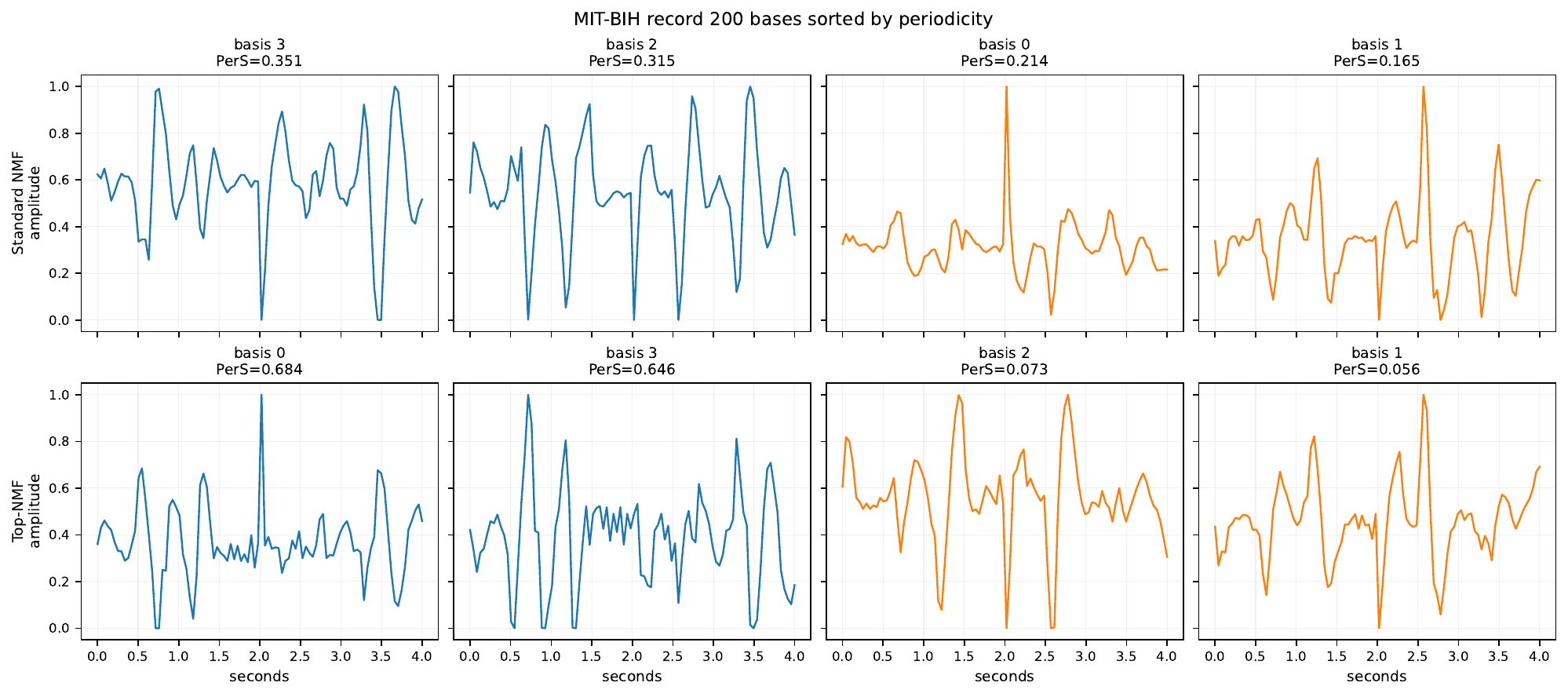}
\caption{Learned ECG bases sorted by the periodicity score $\PerS_{M,\tau}$.
\TopNMF{} produces two high-periodicity rhythm templates
($\PerS=0.684$ and $0.646$) and two low-periodicity residual bases
($\PerS=0.056$ and $0.073$), whereas standard NMF leaves all bases at
intermediate periodicity.}
\label{fig:mitbih-periodic-bases}
\end{figure}

\Cref{fig:mitbih-periodic-bases} shows the same structural effect observed in
the synthetic experiment, but now on ECG morphology rather than an artificial
waveform dictionary. The reconstruction RMSE is essentially unchanged
(\TopNMF{} $0.1003$ versus standard NMF $0.0999$).

Finally, we use the post hoc PVC labels to test whether this separation is
informative at the window level. The pipeline is deliberately simple. We define
the periodic bases as the two bases with largest $\PerS_{M,\tau}$, refit
non-negative coefficients for every window using only those two bases, and use
the resulting periodic-only reconstruction RMSE as the anomaly score. This
score measures how much of a beat-centred ECG excerpt cannot be explained by
the learned periodic rhythm templates. We report AUROC as a threshold-free
ranking metric, AUPRC to emphasise precision among high-scoring windows, and
top-$k$ precision with $k$ equal to the number of PVC-centred windows. The
periodic-only score for \TopNMF{} obtains AUROC $0.955$, AUPRC $0.953$, and
top-$k$ precision $0.897$, compared with AUROC $0.105$, AUPRC $0.205$, and
top-$k$ precision $0.080$ for standard NMF. \Cref{fig:mitbih-anomaly-timeline}
visualises this score over a one-minute interval: PVC-centred windows coincide
with elevated periodic-only reconstruction error. This example is not intended
as a supervised arrhythmia detector; rather, it shows that the target-based
periodicity score can turn a real non-negative ECG factorisation into
rhythm-like bases plus interpretable residual components.

\begin{figure}[t]
\centering
\includegraphics[width=0.92\linewidth]{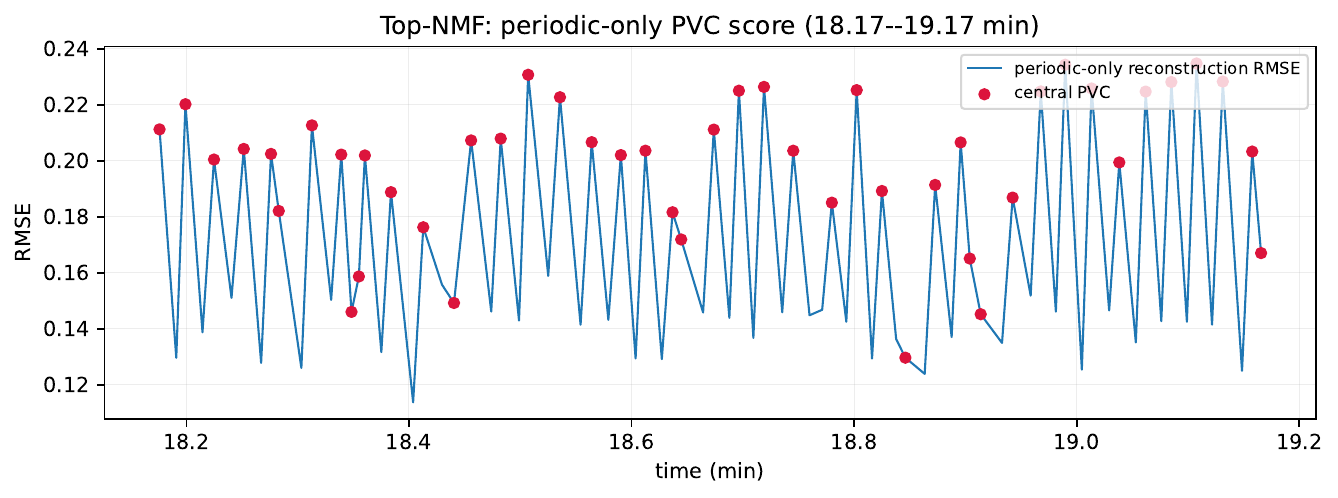}
\caption{Post hoc PVC evaluation on MIT-BIH record 200. The plotted score is
the periodic-only reconstruction RMSE over a one-minute interval. PVC-centred
windows, shown in red, concentrate at high scores even though PVC labels are
not used in the factorisation objective.}
\label{fig:mitbih-anomaly-timeline}
\end{figure}

\section{Discussion and Limitations}
\label{sec:discussion}

The experimental results demonstrate the utility of topological regularisation across different data modalities. This section contextualises these findings and addresses the inherent limitations of the proposed framework.

\paragraph{Interpretability and reconstruction.}
In \TopNMF{}, the reconstruction loss and the topological score stand in tension. While NMF provides a part-based decomposition via non-negativity, it does not inherently prefer any particular spatial or temporal structure among equally reconstructive solutions. The hyper-parameter $\lambda_{\mathrm{top}}$ therefore controls the balance between data fidelity and structural interpretability. If $\lambda_{\mathrm{top}}$ is too small, the learned bases may remain as fragmented or mixed as those of standard NMF; conversely, an excessive penalty may satisfy the topological prior at the expense of capturing the underlying data distribution or semantic nuances.

\paragraph{Designing the topological prior.}
The flexibility of the framework is both a strength and a limitation. Applying \TopNMF{} requires a sequence of domain-specific modelling choices: the structured domain $\Omega$, the filtration $\Fil(v)$, the homology dimensions to be measured, and the scalar score $q$. Connectedness is a natural prior for image parts, while loops in sliding-window embeddings are appropriate for periodicity. However, other applications may necessitate the design of novel filtrations or scoring functions. Consequently, \TopNMF{} is best understood as a modular framework for expressing topological priors rather than an automated procedure for discovering them.

\paragraph{Computational cost.}
Evaluating the topological scores is more expensive than computing the reconstruction gradients. Persistent homology generally adds significant overhead over standard NMF operations. The exact overhead depends on the complexity of the filtration; for instance, graph-based scores $\CP_{\alpha}$ benefit from an efficient maximum-spanning-tree representation, whereas Vietoris--Rips filtrations for time series can become costly as the number of sliding windows increases. In practice, this cost can be mitigated by updating the topological subgradient only every few epochs or by regularising only a subset of the basis rows. Scaling persistence-regularised NMF to high-dimensional domains remains a primary practical challenge.

\paragraph{Dependence on discretisation and domain construction.}
Although persistent homology is stable under perturbations of the function values, the resulting regulariser is inherently tied to the domain representation. For images, the choice of pixel adjacency (e.g., 4-connectivity or 8-connectivity) determines the connectivity of components, and for time series, the delay and embedding dimensions determine the visibility of recurrent structures. These choices should be viewed as part of the model specification.
Additionally, in unsupervised real-world scenarios, specifying appropriate hyperparameters can be challenging. Future work will explore heuristics or cross-validation schemes to set these parameters automatically without relying on oracle knowledge of the ground truth.

\paragraph{Optimisation and identifiability.}
The convergence results in \cref{sec:optimisation} ensure stationarity for a non-smooth, non-convex objective but do not guarantee global optimality or resolve the general identifiability problem of NMF. Topological regularisation selects structurally preferable solutions from the set of ambiguous factorisations, but the validity of this selection is contingent upon the appropriateness of the imposed prior. Establishing rigorous recovery conditions under explicit generative assumptions remains a fruitful direction for future theoretical investigation.

\paragraph{Extensions.}
The principle of topological regularisation extends beyond the NMF model. Topological scores can be integrated into other dictionary-learning objectives, tensor factorisation models, or neural representation-learning systems where the latent components occupy a structured domain. Furthermore, basis-side topological scores could be combined with coefficient-side regularisers, such as graph Laplacian penalties, to enforce joint structural consistency. Ultimately, this work suggests that persistent homology can serve as a formal modelling language for defining the desired structural characteristics of learned representations.

\section{Conclusion}
\label{sec:conclusion}

We introduced \TopNMF{}, a framework for learning interpretable NMF basis
functions by regularising their structure on a domain where topology is
meaningful. From the machine-learning perspective, we contribute a method for
encoding basis-level structural priors directly in the matrix-factorisation
objective. From the topological-data-analysis perspective, we employ persistent
homology as an optimisable summary of basis functions, rather than only as a
\emph{post hoc} descriptor.

The framework treats data and basis vectors as non-negative functions on a
structured domain and then builds a topological score from an appropriate
filtration. This gives a common language for spatially connected image parts,
periodic time-series bases, and clique-like graph components. The resulting
objective is non-smooth but admits projected subgradient updates with standard
stationarity guarantees under tame regularity assumptions.

Across the experiments, the same pattern appears: topological regularisation
can distinguish basis functions that have similar reconstruction performance
but different structural meaning. The broader message is therefore not that one
topological score is universally best, but that persistent homology gives a
practical way to turn domain knowledge about interpretable components into a
learnable regularisation term.


\acks{This research was partially supported by JST Moonshot R\&D Grant Number JPMJMS2021, and KAKENHI, Grant-in-Aid for Scientific Research (B) 25K00921 and (S) 25H00399.}

\appendix
\section{Proofs}
\label{appendix:proofs}

In this section, we provide the formal proof of \Cref{thm:cp-local-minimizers} concerning the local minimisers of the clique-promoting functional $\CP_\alpha$. Recall that a weight function $v \colon E \to [0,1]$ is \emph{admissible} if $\|v\|_{\infty}=1$.
We sometimes write $v_e=v(e)$ for brevity.

\subsection*{Proof of \Cref{thm:cp-local-minimizers}}

We denote the support graph by $G_v = (U, E_1)$, where $E_1 = \supp(v)$. The functional is given by:
\[
\CP_{\alpha}(v) = -\sum_{e \in T(v)} (1-v(e))^2 - \alpha \sum_{e \in E \setminus T(v)} v(e)^2,
\]
where $T(v)$ is a maximum-weight spanning tree (MWST) of $(G, v)$.

\paragraph{(1) $\Rightarrow$ (2): Necessity}
Assume $v$ is a strict local minimiser of $\CP_\alpha$ among admissible edge weights.

\textit{Step 1: $v$ must be binary.}
Suppose there exists an edge $e \in E$ such that $v(e) \in (0, 1)$. Pick $x\in v(E)\cap (0,1)$ and let $S_x:=v^{-1}(x)\neq\varnothing$. Choose $\varepsilon>0$ so small that $(x-\varepsilon,x+\varepsilon)\subseteq(0,1)$ and that $v(E)\setminus\{x\}$ is disjoint from this interval. For $|t|<\varepsilon$, define $v_t(e):=x+t$ for $e\in S_x$ and $v_t(e):=v(e)$ otherwise. Because the weak ordering of edge weights is preserved on this interval, the family of maximum-weight spanning trees is the same for every $v_t$. Fixing one such tree $T$, \eqref{eq:cp-kruskal} gives
\[
\CP_\alpha(v_t) = C - |S_x\cap T|(1-x-t)^2 - \alpha|S_x\setminus T|(x+t)^2,
\]
with $C$ independent of $t$. This is a strictly concave quadratic in $t$, so $t=0$ cannot be a local minimum, a contradiction. Thus $v(e)\in\{0,1\}$ for every $e\in E$. Since $\|v\|_\infty=1$, at least one edge has weight $1$.

\textit{Step 2: Each non-trivial component of $G_v$ must be a clique.}
Suppose a connected component of $G_v$ is not a clique. Then there exist vertices $a, b$ in the same component such that $v(ab) = 0$, but there is a path $P$ in $G_v$ connecting $a$ and $b$ where all edges have weight $1$. Consider the perturbation $v_\varepsilon(ab) = \varepsilon$ for $\varepsilon > 0$. For sufficiently small $\varepsilon$, $ab$ is the lightest edge in the cycle $P \cup \{ab\}$, so $ab \notin T(v_\varepsilon)$. The only change in the objective is the addition of the term $-\alpha \varepsilon^2$. Since $-\alpha \varepsilon^2 < 0$, we have $\CP_\alpha(v_\varepsilon) < \CP_\alpha(v)$, contradicting the minimality of $v$. Thus, the component must be a clique.

\textit{Step 3: Each component must have size at least 3.}
Suppose some component is a single edge $e = ab$ with $v(e)=1$. Because $e$ is an isolated component of $G_v$, it is the unique positive edge crossing the cut $\{a\}\mid (U\setminus\{a\})$, hence every maximum-weight spanning tree of $v$ contains $e$. Consider the perturbation $v_\varepsilon(e):=1-\varepsilon$ and $v_\varepsilon=v$ elsewhere, with $\varepsilon\in(0,1)$. By the assumption that $\supp(v)$ contains at least two edges, there is another edge in $G_v$ with weight 1. Thus $\max v_\varepsilon=1$, making $v_\varepsilon$ admissible, and the same cut argument shows $e\in T(v_\varepsilon)$. Applying \eqref{eq:cp-kruskal} with this shared spanning tree,
\[
\CP_\alpha(v_\varepsilon)=\CP_\alpha(v)-\bigl(1-(1-\varepsilon)\bigr)^2=\CP_\alpha(v)-\varepsilon^2<\CP_\alpha(v),
\]
contradicting local minimality. (Note that this perturbation argument fails for clique components of size $\ge 3$: there, the corresponding edge can be excluded from a maximum-weight spanning tree without disconnecting the clique, so it contributes via the cycle term rather than the tree term.)

\paragraph{(2) $\Rightarrow$ (1): Sufficiency}
Assume each non-trivial component of $G_v$ is a clique of size $n_i \ge 3$ and $v \equiv 1$ on these edges. Let $w$ be a small perturbation such that $\tilde{v} = v + w$ is admissible. Let $t = \|w\|_\infty < 1/2$. For $e \in E_1$, $\tilde{v}(e) \in [1-t, 1]$, and for $e \notin E_1$, $\tilde{v}(e) \in [0, t]$.

Because $t < 1/2$, the edges in $E_1$ are strictly heavier than those in $E \setminus E_1$. Consequently, any MWST $T(\tilde{v})$ must:
\begin{enumerate}
    \item First pick a spanning tree within each clique component using only edges from $E_1$.
    \item Connect these components using edges from $E \setminus E_1$ (which have weights $\approx 0$).
\end{enumerate}
Crucially, such a $T(\tilde{v})$ is also an MWST for the original $v$. Now consider the difference $\Delta = \CP_\alpha(\tilde{v}) - \CP_\alpha(v)$:
\[
\Delta = -\sum_{e \in T} \left( (1-v_e-w_e)^2 - (1-v_e)^2 \right) - \alpha \sum_{e \notin T} \left( (v_e+w_e)^2 - v_e^2 \right).
\]
Expanding the squares and noting that $v_e=1$ for $e \in E_1$ and $v_e=0$ for $e \notin E_1$:
\[
\Delta = \sum_{e \in T \cap E_1} (-w_e^2) + \sum_{e \in T \setminus E_1} (2w_e - w_e^2) - \alpha \sum_{e \in E_1 \setminus T} (2w_e + w_e^2) - \alpha \sum_{e \notin (T \cup E_1)} w_e^2.
\]
Rearranging terms into linear and quadratic parts:
\[
\Delta = \underbrace{2 \sum_{e \in T \setminus E_1} w_e + 2\alpha \sum_{e \in E_1 \setminus T} (-w_e)}_{\text{Linear Term } L(w)} - \underbrace{\Bigl(\sum_{e \in T} w_e^2 + \alpha \sum_{e \notin T} w_e^2\Bigr)}_{\text{Quadratic Term } Q(w)}.
\]
For any non-zero perturbation $w$, either some $w_e > 0$ for $e \notin E_1$ or some $w_e < 0$ for $e \in E_1$. In both cases, the linear term $L(w)$ is positive. Specifically, if $t = \max |w_e|$, then $L(w) \ge 2\min(1, \alpha)t$. The quadratic term $Q(w)$ is $O(t^2)$. For sufficiently small $t$, the linear term dominates, ensuring $\Delta > 0$. Thus, $v$ is a strict local minimiser. $\square$

\bibliography{reference}

@article{lee1999learning,
  title={Learning the parts of objects by non-negative matrix factorization},
  author={Lee, Daniel D and Seung, H Sebastian},
  journal={Nature},
  volume={401},
  number={6755},
  pages={788--791},
  year={1999},
  publisher={Nature Publishing Group UK London}
}

@article{hoyer2004non,
  title={Non-negative matrix factorization with sparseness constraints},
  author={Hoyer, Patrik O},
  journal={Journal of Machine Learning Research},
  volume={5},
  number={9},
  year={2004}
}

@incollection{Edelsbrunner2008,
  title={Persistent homology--a survey},
  author={Edelsbrunner, Herbert and Harer, John},
  booktitle={Surveys on Discrete and Computational Geometry: Twenty Years Later},
  series={Contemporary Mathematics},
  volume={453},
  pages={257--282},
  year={2008},
  publisher={American Mathematical Society},
  doi={10.1090/conm/453/08802}
}

@inproceedings{carriere2021optimizing,
  title={Optimizing persistent homology based functions},
  author={Carriere, Mathieu and Chazal, Fr{\'e}d{\'e}ric and Glisse, Marc and Ike, Yuichi and Kannan, Hariprasad and Umeda, Yuhei},
  booktitle={Proceedings of the 38th International Conference on Machine Learning},
  pages={1294--1303},
  year={2021},
  editor={Meila, Marina and Zhang, Tong},
  volume={139},
  series={Proceedings of Machine Learning Research},
  publisher={PMLR},
  url={https://proceedings.mlr.press/v139/carriere21a.html}
}

@article{cichocki2008nonnegative,
  title={Nonnegative matrix and tensor factorization [lecture notes]},
  author={Cichocki, Andrzej and Zdunek, Rafal and Amari, Shun-ichi},
  journal={IEEE Signal Processing Magazine},
  volume={25},
  number={1},
  pages={142--145},
  year={2008},
  publisher={IEEE},
  doi={10.1109/MSP.2008.4408452}
}

@article{cai2010graph,
  title={Graph regularized nonnegative matrix factorization for data representation},
  author={Cai, Deng and He, Xiaofei and Han, Jiawei and Huang, Thomas S},
  journal={IEEE Transactions on Pattern Analysis and Machine Intelligence},
  volume={33},
  number={8},
  pages={1548--1560},
  year={2011},
  publisher={IEEE},
  doi={10.1109/TPAMI.2010.231}
}

@article{taslaman2012framework,
  title={A framework for regularized non-negative matrix factorization, with application to the analysis of gene expression data},
  author={Taslaman, Leo and Nilsson, Bj{\"o}rn},
  journal={PLoS ONE},
  volume={7},
  number={11},
  pages={e46331},
  year={2012},
  publisher={Public Library of Science San Francisco, USA},
  doi={10.1371/journal.pone.0046331}
}

@article{obayashi2022persistent,
  title={Persistent homology analysis with nonnegative matrix factorization for 3D voxel data of iron ore sinters},
  author={Obayashi, Ippei and Kimura, Masao},
  journal={JSIAM Letters},
  volume={14},
  pages={151--154},
  year={2022},
  publisher={The Japan Society for Industrial and Applied Mathematics}
}

@article{ichinomiya2020protein,
  title={Protein-folding analysis using features obtained by persistent homology},
  author={Ichinomiya, Takashi and Obayashi, Ippei and Hiraoka, Yasuaki},
  journal={Biophysical Journal},
  volume={118},
  number={12},
  pages={2926--2937},
  year={2020},
  publisher={Elsevier}
}

@article{ichinomiya2022topological,
  title={Topological data analysis gives two folding paths in HP35 (nle-nle), double mutant of villin headpiece subdomain},
  author={Ichinomiya, Takashi},
  journal={Scientific Reports},
  volume={12},
  number={1},
  pages={2719},
  year={2022},
  publisher={Nature Publishing Group UK London}
}

@article{perea2015sliding,
  title={Sliding windows and persistence: An application of topological methods to signal analysis},
  author={Perea, Jose A and Harer, John},
  journal={Foundations of Computational Mathematics},
  volume={15},
  pages={799--838},
  year={2015},
  publisher={Springer},
  doi={10.1007/s10208-014-9206-z}
}

@article{skraba2020wasserstein,
  title={Wasserstein stability for persistence diagrams},
  author={Skraba, Primoz and Turner, Katharine},
  journal={arXiv preprint arXiv:2006.16824},
  year={2020}
}

@article{davis2020stochastic,
  title={Stochastic subgradient method converges on tame functions},
  author={Davis, Damek and Drusvyatskiy, Dmitriy and Kakade, Sham and Lee, Jason D},
  journal={Foundations of Computational Mathematics},
  volume={20},
  number={1},
  pages={119--154},
  year={2020},
  publisher={Springer},
  doi={10.1007/s10208-018-09409-5}
}

@book{dey2022computational,
  title={Computational topology for data analysis},
  author={Dey, Tamal Krishna and Wang, Yusu},
  year={2022},
  publisher={Cambridge University Press}
}

@inproceedings{lee2000algorithms,
  title={Algorithms for non-negative matrix factorization},
  author={Lee, Daniel and Seung, H. Sebastian},
  booktitle={Advances in Neural Information Processing Systems},
  editor={Leen, T. and Dietterich, T. and Tresp, V.},
  volume={13},
  year={2000},
  publisher={MIT Press},
  url={https://proceedings.neurips.cc/paper_files/paper/2000/file/f9d1152547c0bde01830b7e8bd60024c-Paper.pdf}
}

@article{yin2010nonnegative,
  title={Nonnegative matrix factorization with bounded total variational regularization for face recognition},
  author={Yin, Haiqing and Liu, Hongwei},
  journal={Pattern Recognition Letters},
  volume={31},
  number={16},
  pages={2468--2473},
  year={2010},
  publisher={Elsevier},
  doi={10.1016/j.patrec.2010.08.001}
}

@article{perea2015sw1pers,
  title={SW1PerS: Sliding windows and 1-persistence scoring; discovering periodicity in gene expression time series data},
  author={Perea, Jose A and Deckard, Anastasia and Haase, Steve B and Harer, John},
  journal={BMC Bioinformatics},
  volume={16},
  number={1},
  pages={257},
  year={2015},
  publisher={BioMed Central},
  doi={10.1186/s12859-015-0645-6}
}

@article{wang2012nonnegative,
  title={Nonnegative matrix factorization: A comprehensive review},
  author={Wang, Yu-Xiong and Zhang, Yu-Jin},
  journal={IEEE Transactions on Knowledge and Data Engineering},
  volume={25},
  number={6},
  pages={1336--1353},
  year={2012},
  publisher={IEEE},
  doi={10.1109/TKDE.2012.51}
}

@inproceedings{xu2003document,
  title={Document clustering based on non-negative matrix factorization},
  author={Xu, Wei and Liu, Xin and Gong, Yihong},
  booktitle={Proceedings of the 26th annual international ACM SIGIR conference on Research and development in information retrieval},
  pages={267--273},
  year={2003}
}

@inproceedings{smaragdis2003non,
  title={Non-negative matrix factorization for polyphonic music transcription},
  author={Smaragdis, Paris and Brown, Judith C},
  booktitle={2003 IEEE Workshop on Applications of Signal Processing to Audio and Acoustics (IEEE Cat. No. 03TH8684)},
  pages={177--180},
  year={2003},
  organization={IEEE}
}

@article{brunet2004metagenes,
  title={Metagenes and molecular pattern discovery using matrix factorization},
  author={Brunet, Jean-Philippe and Tamayo, Pablo and Golub, Todd R and Mesirov, Jill P},
  journal={Proceedings of the national academy of sciences},
  volume={101},
  number={12},
  pages={4164--4169},
  year={2004},
  publisher={National Academy of Sciences}
}

@inproceedings{belkin2001laplacian,
  title={Laplacian eigenmaps and spectral techniques for embedding and clustering},
  author={Belkin, Mikhail and Niyogi, Partha},
  booktitle={Advances in Neural Information Processing Systems},
  editor={Dietterich, T. and Becker, S. and Ghahramani, Z.},
  volume={14},
  year={2001},
  publisher={MIT Press},
  url={https://proceedings.neurips.cc/paper_files/paper/2001/file/f106b7f99d2cb30c3db1c3cc0fde9ccb-Paper.pdf}
}

@article{mairal2010online,
  title={Online learning for matrix factorization and sparse coding},
  author={Mairal, Julien and Bach, Francis and Ponce, Jean and Sapiro, Guillermo},
  journal={Journal of Machine Learning Research},
  volume={11},
  number={1},
  pages={19--60},
  year={2010}
}

@book{mallat1999wavelet,
  title={A wavelet tour of signal processing},
  author={Mallat, Stephane},
  year={1999},
  publisher={Academic Press}
}

@inproceedings{donoho2003does,
  title={When does non-negative matrix factorization give a correct decomposition into parts?},
  author={Donoho, David and Stodden, Victoria},
  booktitle={Advances in Neural Information Processing Systems},
  editor={Thrun, S. and Saul, L. and Sch{\"o}lkopf, B.},
  volume={16},
  year={2003},
  publisher={MIT Press},
  url={https://proceedings.neurips.cc/paper_files/paper/2003/file/1843e35d41ccf6e63273495ba42df3c1-Paper.pdf}
}

@article{rubinstein2010dictionaries,
  title={Dictionaries for sparse representation modeling},
  author={Rubinstein, Ron and Bruckstein, Alfred M and Elad, Michael},
  journal={Proceedings of the IEEE},
  volume={98},
  number={6},
  pages={1045--1057},
  year={2010},
  publisher={IEEE}
}

@book{jolliffe2002principal,
  title={Principal Component Analysis},
  author={Jolliffe, Ian T.},
  series={Springer Series in Statistics},
  edition={2},
  year={2002},
  publisher={Springer}
}

@article{bruel2020topology,
  title={Topology-Aware Surface Reconstruction for Point Clouds},
  author={Br{\"u}el-Gabrielsson, Rickard and Ganapathi-Subramanian, Vignesh and Skraba, Primoz and Guibas, Leonidas J},
  journal={Computer Graphics Forum},
  volume={39},
  number={5},
  pages={197--207},
  year={2020},
  publisher={Wiley Online Library},
  doi={10.1111/cgf.14079}
}

@inproceedings{hofer2020graph,
  title={Graph filtration learning},
  author={Hofer, Christoph and Graf, Florian and Rieck, Bastian and Niethammer, Marc and Kwitt, Roland},
  booktitle={Proceedings of the 37th International Conference on Machine Learning},
  pages={4314--4323},
  year={2020},
  editor={III, Hal Daum{\'e} and Singh, Aarti},
  volume={119},
  series={Proceedings of Machine Learning Research},
  publisher={PMLR},
  url={https://proceedings.mlr.press/v119/hofer20b.html}
}

@article{nishikawa2023adaptive,
  title={Adaptive topological feature via persistent homology: Filtration learning for point clouds},
  author={Nishikawa, Naoki and Ike, Yuichi and Yamanishi, Kenji},
  journal={Advances in Neural Information Processing Systems},
  volume={36},
  pages={9131--9143},
  year={2023}
}

@article{leygonie2022framework,
  title={A framework for differential calculus on persistence barcodes},
  author={Leygonie, Jacob and Oudot, Steve and Tillmann, Ulrike},
  journal={Foundations of Computational Mathematics},
  volume={22},
  number={4},
  pages={1069--1131},
  year={2022},
  publisher={Springer},
  doi={10.1007/s10208-021-09522-y}
}

@article{townes2023nonnegative,
  title={Nonnegative spatial factorization applied to spatial genomics},
  author={Townes, F William and Engelhardt, Barbara E},
  journal={Nature Methods},
  volume={20},
  number={2},
  pages={229--238},
  year={2023},
  publisher={Nature Publishing Group US New York},
  doi={10.1038/s41592-022-01687-w}
}

@article{zhang2008topology,
  title={Topology preserving non-negative matrix factorization for face recognition},
  author={Zhang, Taiping and Fang, Bin and Tang, Yuan Yan and He, Guanghui and Wen, Jing},
  journal={IEEE Transactions on Image Processing},
  volume={17},
  number={4},
  pages={574--584},
  year={2008},
  publisher={IEEE}
}

@article{fournet2014contact,
  title={Contact Patterns among High School Students},
  author={Fournet, Julie and Barrat, Alain},
  journal={PLoS ONE},
  volume={9},
  number={9},
  pages={e107878},
  year={2014},
  publisher={Public Library of Science},
  doi={10.1371/journal.pone.0107878}
}

@article{moody2001impact,
  title={The impact of the {MIT-BIH} Arrhythmia Database},
  author={Moody, George B and Mark, Roger G},
  journal={IEEE Engineering in Medicine and Biology Magazine},
  volume={20},
  number={3},
  pages={45--50},
  year={2001},
  doi={10.1109/51.932724}
}

@article{goldberger2000physiobank,
  title={PhysioBank, PhysioToolkit, and PhysioNet: Components of a new research resource for complex physiologic signals},
  author={Goldberger, Ary L and Amaral, Luis A N and Glass, Leon and Hausdorff, Jeffrey M and Ivanov, Plamen Ch and Mark, Roger G and Mietus, Joseph E and Moody, George B and Peng, Chung-Kang and Stanley, H Eugene},
  journal={Circulation},
  volume={101},
  number={23},
  pages={e215--e220},
  year={2000},
  doi={10.1161/01.CIR.101.23.e215}
}

@article{scikit-learn,
  title={Scikit-learn: Machine Learning in {P}ython},
  author={Pedregosa, F. and Varoquaux, G. and Gramfort, A. and Michel, V.
          and Thirion, B. and Grisel, O. and Blondel, M. and Prettenhofer, P.
          and Weiss, R. and Dubourg, V. and Vanderplas, J. and Passos, A. and
          Cournapeau, D. and Brucher, M. and Perrot, M. and Duchesnay, E.},
  journal={Journal of Machine Learning Research},
  volume={12},
  pages={2825--2830},
  year={2011}
}

@article{boutsidis2008nndsvd,
  title={{NNDSVD}: {N}onnegative matrix factorization based on singular value decomposition},
  author={Boutsidis, Christos and Gallopoulos, Efstratios},
  journal={Pattern Recognition},
  volume={41},
  number={4},
  pages={1350--1362},
  year={2008},
  publisher={Elsevier}
}

@article{gudhi:urm,
  title={{GUDHI} User and Reference Manual},
  author={{The GUDHI Project}},
  journal={GUDHI Editorial Board},
  year={2020},
  url={https://gudhi.inria.fr/doc/latest/}
}

@article{kaji2020cubical,
  title={CubicalRipser: Software for computing persistent homology of images and volume data},
  author={Kaji, Shizuo and Sudo, Takanori and Ahara, Kazushi},
  journal={arXiv preprint arXiv:2005.11270},
  year={2020}
}

@article{zitnik2012nimfa,
  title={Nimfa: A Python Library for Nonnegative Matrix Factorization},
  author={{\v{Z}}itnik, Marinka and Zupan, Bla{\v{z}}},
  journal={Journal of Machine Learning Research},
  volume={13},
  pages={849--853},
  year={2012}
}

\end{document}